\documentclass[lettersize,journal]{IEEEtai}
\usepackage{amsmath,amsfonts}
\usepackage{algorithmic}
\usepackage{algorithm}
\usepackage{array}
\usepackage[caption=false,font=normalsize,labelfont=sf,textfont=sf]{subfig}
\usepackage{textcomp}
\usepackage{stfloats}
\usepackage{url}
\usepackage{verbatim}
\usepackage{graphicx}
\usepackage{cite}
\hyphenation{op-tical net-works semi-conduc-tor IEEE-Xplore}
\usepackage{bm}
\usepackage{makecell}

\usepackage{amsfonts}  
\usepackage{algorithm}
\usepackage{algorithmic}
\usepackage{siunitx}   
\usepackage{upgreek}   
\usepackage{xcolor}    
\usepackage{booktabs}       
\usepackage{longtable}
\usepackage{threeparttable}
\usepackage{multirow}
\usepackage{multicol}
\usepackage{times}
\usepackage{hyperref}

\usepackage{amsthm}



\def\x{\mathbf{x}}

\def\R{\mathbf{R}}

\def\O{\mathcal O}
\def\mN{\mathcal N}
\def\mP{\mathcal P}
\def\mR{\mathcal R}
\def\mA{\mathcal A}
\def\mB{\mathcal B}
\def\0{\mathbf{0}}
\def\v{\mathbf{v}}
\def\x{\mathbf{x}}

\def\S{\mathcal{S}}

\def\rank{\text{rank}}


\usepackage[normalem]{ulem}

\newtheorem{thm}{Theorem}
\newtheorem{lem}{Lemma}

\newtheorem{definition}{Definition}
\newtheorem{pre}{Preliminary}



\def\S{\mathcal{S}}

\def\v{\mathbf{v}}

\def\S{\mathcal{S}}

\def\v{\mathbf{v}}

\def\x{\mathbf{x}}

\def\R{\mathbf{R}}

\def\O{\mathcal O}
\def\mN{\mathcal N}
\def\mP{\mathcal P}
\def\mR{\mathcal R}
\def\mA{\mathcal A}
\def\mB{\mathcal B}
\def\0{\mathbf{0}}
\def\v{\mathbf{v}}
\def\x{\mathbf{x}}

\def\S{\mathcal{S}}



\definecolor{darkblue}{rgb}{0,0.22,0.66}
\hypersetup{colorlinks=true,citecolor=cyan,linkcolor=darkblue}
\usepackage{mathrsfs}
\usepackage{float}
\usepackage{mathtools,amssymb}
\graphicspath{{figuresPF/}{figure/}}

\let\cdots\customcdots

\let\myforall\forall
\def\forall{{\myforall\, }}
\let\myexists\exists
\def\exists{{\myexists\, }}
\let\emptyset\varnothing


\begin{document}
	
\title{\huge\bf Deep ReLU Networks Have Surprisingly Simple Polytopes}

\author{Feng-Lei Fan$^{1}$, \textit{Member, IEEE}, Wei Huang$^{2}$, Xiangru Zhong$^{1}$, Lecheng Ruan$^{3}$, Huan Xiong$^{4}$, Tieyong Zeng$^{1}$, Fei Wang$^{5}$, \textit{Senior Member, IEEE} 
\thanks{*Huan Xiong is the corresponding author. }
\thanks{$^{1}$Feng-Lei Fan, Xiangru Zhong, and Tieyong Zeng are with Center of Mathematical Artificial Intelligence, Department of Mathematics, The Chinese University of Hong Kong, Shatin, Hong Kong.}
\thanks{$^{2}$Wei Huang is with RIKEN Center for Advanced Intelligence Project (AIP), Tokyo, Japan} 
\thanks{$^{3}$Lecheng Ruan is with the College of Engineering, Peking University, Beijing, China.}
\thanks{$^{4}$Huan Xiong is with Institute for Advanced Study in Mathematics, Harbin Institute of Technology, Harbin, Heilongjiang Province, China.}
\thanks{$^{5}$Fei Wang is with Weill Cornell Medicine, Cornell University, New York City, NY, USA.}
}


\markboth{Journal of \LaTeX\ Class Files,~Vol.~14, No.~8, August~2021}%
{Shell \MakeLowercase{\textit{et al.}}: A Sample Article Using IEEEtran.cls for IEEE Journals}

	
\maketitle
	
\begin{abstract}

A ReLU network is a piecewise linear function over polytopes. Figuring out the properties of such polytopes is of fundamental importance for the research and development of neural networks. So far, either theoretical or empirical studies on polytopes only stay at the level of counting their number, which is far from a complete characterization. Here, we propose to study the shapes of polytopes via the number of faces of the polytope. Then, by computing and analyzing the histogram of faces across polytopes, we find that a ReLU network has relatively simple polytopes under both initialization and gradient descent, although these polytopes can be rather diverse and complicated by a specific design. This finding can be appreciated as a kind of generalized implicit bias, subjected to the intrinsic geometric constraint in space partition of a ReLU network. Next, we perform a combinatorial analysis to explain why adding depth does not generate a more complicated polytope by bounding the average number of faces of polytopes with the dimensionality. Our results concretely reveal what kind of simple functions a network learns and what will happen when a network goes deep. Also, by characterizing the shape of polytopes, the number of faces can be a novel leverage for other problems, \textit{e.g.}, serving as a generic tool to explain the power of popular shortcut networks such as ResNet and analyzing the impact of different regularization strategies on a network's space partition.

\end{abstract}

\begin{IEEEImpStatement}
In this work, beyond counting the number of polytopes, we propose to count the number of faces every polytope has for a more complete characterization of ReLU networks. Then, we find that a ReLU network has surprisingly simple polytopes, which is a major generalization of Hanin's famous result that a ReLU network has surprisingly few polytopes. Lastly, via combinatorial techniques, we theoretically derive the tight upper bound for the average face number of polytopes to support our empirical observations. In brief, our work not only provides a new dimension but also a new tool to study the properties of ReLU networks.

\end{IEEEImpStatement}
	
\begin{IEEEkeywords}
Deep Learning, ReLU Networks, Polytopes, Complexity Analysis
\end{IEEEkeywords}
	
\section{Introduction}
\label{submission}

It was shown in a thread of studies \cite{chu2018exact,balestriero2020mad,hanin2019deep,schonsheck2019chart} that a neural network with the piecewise linear activation is to partition the input space into many convex regions, mathematically referred to as polytopes, and each polytope is associated with a linear function (hereafter, we use convex regions, linear regions, and polytopes interchangeably). Hence, a neural network is essentially a piecewise linear function over polytopes. Based on this property, the core idea of a variety of important theoretical advances and empirical findings is to turn the investigation of neural networks into the investigation of polytopes. Figuring out the properties of such polytopes can shed light on many critical problems, which can greatly expedite the research and development of neural networks. Let us use two representative examples to demonstrate the utility of characterizing polytopes:

The first is the explanation of the power of depth. In the era of deep learning, many studies \cite{mohri2018foundations, bianchini2014complexity,telgarsky2015representation,arora2016understanding} attempted to explain why a deep network can perform superbly over a shallow one. One explanation to this question is on the superior representation power of deep networks, \textit{i.e.}, a deep network can express a more complicated function but a shallow one with a similar size cannot \cite{cohen2016expressive,poole2016exponential,xiong2020number}.
Their basic idea is to characterize the complexity of the function expressed by a neural network, thereby demonstrating that increasing depth can greatly maximize such a complexity measure compared to increasing width. Currently, the number of linear regions is one of the most popular complexity measures because it respects the functional structure of the widely-used ReLU networks. 
\cite{pascanu2013number} firstly proposed to use the number of linear regions as the complexity measure. By directly applying Zaslavsky's Theorem \cite{zaslavsky1997facing}, \cite{pascanu2013number} obtained a lower bound $\left(\prod_{l=0}^{L-1} \left \lfloor{\frac{n_l}{n_0}}\right \rfloor\right)\sum_{i=0}^{n_0} \binom{n_L}{i}$ for the maximum number of linear regions of a fully-connected
ReLU network with $n_0$ inputs and $L$ hidden layers of widths $n_1, n_2, \cdots, n_L$. Since this work, deriving the lower and upper bounds of the maximum number of linear regions becomes the main research direction \cite{montufar2014number, telgarsky2015representation, montufar2017notes,serra2018bounding,croce2019provable, hu2018nearly,xiong2020number}. All these bounds suggest the expressive ability of depth.
The second interesting example is the finding of the high-capacity-low-reality phenomenon \cite{hu2021model,hanin2019deep}, that the theoretical tight upper bound for the number of polytopes is much larger than what is actually learned by a network, \textit{i.e.}, deep ReLU networks have surprisingly few polytopes both at initialization and throughout the training. This counter-intuitive phenomenon can also be regarded as an implicit bias, which to some extent suggests a deep network does not overfit, since it tends to learn a simple solution.




We observe that current studies on polytopes suffer a critical limit. Either theoretical or empirical studies only stay at the level of counting the number of polytopes, which is far from a complete characterization to ReLU networks. As we know, in a feed-forward network of $L$ hidden layers, each polytope is encompassed by a group of hyperplanes, and each hyperplane is associated with a neuron. The details of how polytopes are formed in a ReLU network are in Supplementary Material. Hence, any polytope is generated by at most $\sum_{i=1}^L n_i$ and at least $n_0+1$ hyperplanes, which is quite a large range. Thus, the face numbers of polytopes can vary a lot. Unfortunately, the existing ``counting'' studies did not accommodate the differences among polytopes. Can we upgrade the characterization of polytopes beyond counting to capture a more complete picture of a neural network? 



To answer this question, in this manuscript, we propose to move one step further to study the shape of polytopes by their number of faces. 1) First, we provide specific constructions for ReLU networks that partition the space into complex polytopes in terms of either the maximum number of faces or the average number of faces. In other words, polytopes can be complicated in the extremal case. 2) Then, we observe that polytopes formed by ReLU networks are surprisingly simple under both initialization and gradient descent, which is a fundamental characteristic of a ReLU network. Here, simplicity means that although theoretically quite diverse and complicated polytopes can be derived, deep networks tend to find a function with many simple polytopes. Our results concretely reveal what simple functions a network learns and its space partition property, which can be regarded as a novel implicit simplicity bias, subjected to the geometric constraint in space partition of ReLU networks. Here, we generalize the concept of implicit bias, which can be intrinsic and not necessarily dependent on any training procedure. 3) We establish a theorem via non-trivial combinatorial techniques to bound the average face numbers of polytopes to a small number. This theorem explains why depth does not make polytopes more complicated. The key idea is that as the depth increases, a ReLU network divides the space into many local polytopes. But to make local polytopes more complex, two or more hyperplanes associated with neurons in succeeding layers should intersect within the given local polytope, which is hard because the area of polytopes is typically small. In brief, our contributions are threefold. 

    {\footnotesize $\bullet$} We point out the limitation of counting \#polytopes. To deepen our understanding of how a ReLU network partitions the space, we propose to investigate the shape of polytopes with the number of faces a polytope has. Investigating polytopes of a network can lead to a more complete characterization of ReLU networks.
    
    {\footnotesize $\bullet$} We first construct ReLU networks that have complex polytopes in the mean or maximal sense. Then, we empirically find that a ReLU network has surprisingly simple polytopes under both initialization and gradient descent. Such an interesting finding is a new kind of implicit bias from the perspective of shapes of linear regions and independent of neural network training procedures. Previously, \cite{hanin2019deep} showed that deep ReLU networks have few polytopes. Our discovery is that polytopes are simple, which is more fine-grained. \textit{Our result and \cite{hanin2019deep} address two essentially different aspects: quantity and shape.}
Compared to \cite{hanin2019deep}, ours more convincingly illustrates a deep network learns a simple function. Showing the number of polytopes is few is insufficient to claim that a network learns a simple solution because a network can have bizarrely complicated polytopes. 

{\footnotesize $\bullet$} We use combinatorial techniques to derive a tight upper bound for the average face number of polytopes under mild conditions, which not only offers a theoretical guarantee to our empirical finding but also explains why depth does not make polytopes more complicated. Many deep learning theories assume infinite width, which essentially delineates the behaviors of a network when it goes wide. Our theory is valuable in characterizing the impact of depth on a network.

\section{Related Work}

\textbf{Studies on polytopes of a neural network.} Besides the aforementioned works \cite{pascanu2013number,  xiong2020number, montufar2014number, hu2018nearly} that count the number of polytopes,
there are increasingly many studies on polytopes of neural networks. \cite{chu2018exact,hanin2019deep,balestriero2020mad} showed that polytopes generated by a network are convex. \cite{zhang2020empirical} studied how different optimization techniques influence the 
local properties of polytopes, such as the inspheres, the directions of the corresponding hyperplanes, and the relevance of the surrounding regions. \cite{gamba2020hyperplane} showed that the angles between
activation hyperplanes defined by convolutional layers are prone to be similar after training. \cite{hu2020measuring} studied the network using an arbitrary activation function. They first used a piecewise linear function to approximate the given activation function. Then, they monitored the change of \#polytopes to probe if the network overfits. \cite{park2021unsupervised} proposed neural activation coding that maximizes the number of linear regions to enhance the model's performance. \cite{humayun2024deep} computed the density of linear regions as the measure of the local complexity to investigate the phenomenon where generalization occurs long after a network achieves near-zero training error. \cite{khalife2023neural} and \cite{hertrich2021towards} used polyhedral methods to investigate the upper and lower bounds on the sizes of the neural networks required to represent the class of piecewise functions. Our work goes beyond counting the number of polytopes to consider the shapes of polytopes, with the goal of delineating a more complete picture of neural networks. \cite{humayun2023splinecam} exactly computed the geometry of a deep ReLU network's mapping such as decision boundaries and then proposed the SplineCAM to attribute the importance of features for network interpretability.



\textbf{Implicit bias of deep learning.} A network used in practice
is highly over-parameterized compared to the number of training samples. A natural question is often asked: why do deep networks not overfit? To address this question, extensive studies have proposed that a network is implicitly regularized to learn a simple solution. Implicit regularization is also referred to as an implicit bias. Gradient descent algorithms are widely believed to play an essential role in capacity control even when it is not specified in the loss function \cite{gunasekar2018characterizing, soudry2018implicit, arora2019implicit, sekhari2021sgd, lyu2021gradient}. \cite{du2018gradient, woodworth2020kernel} showed that the optimization trajectory of neural networks stays close to the initialization with the help of neural tangent kernel theory. A line of works \cite{arora2019fine, cao2019towards, yang2019fine, choraria2022spectral} have analyzed the bias of a deep network towards lower frequencies, which is referred to as the spectral bias. It was shown in \cite{arora2018stronger,yu2017compressing} that replacing weight matrices with low-rank matrices only deteriorates a network's accuracy very moderately. \cite{ongie2022role,letraining} identified the low-rank bias in linear layers of neural networks with gradient flow. Both theoretical derivation \cite{tu2016low, li2020towards} and empirical findings \cite{jing2020implicit,huh2021low,galanti2023sgd} suggested that gradient descent tends to find a low-rank solution. What's more, weight decay is a necessary condition to achieve the low-rank bias \cite{galanti2023sgd}. 

In contrast, our investigation identifies a new implicit bias from the perspective of linear regions, which draws two highlights: 1) The core of the implicit bias is to emphasize that a network is implicitly regularized to lead to a simple solution. Therefore, such an implicit regularization is not necessarily due to training procedures. What we discover is an intrinsic regularization from the geometric space partition of ReLU network. 2) Different from most implicit biases highlighting a certain property of a network, our implicit bias straightforwardly reveals what kind of simple functions a network learns. Our finding is relevant to the spectral bias. 
Since polytopes are both few and simple, a ReLU network does not produce a lot of oscillations in all directions, which roughly corresponds to a low-frequency solution.


\section{Preliminaries}
\label{sec:pre}

Throughout this paper, we always assume that the input space of an NN is a $d$-dimensional hypercube $C(d,B):= [-B,B]^d = \{\x=(x_1,x_2,\ldots,x_d)\in \mathbb{R}^d: -B\leq x_i \leq B \}$ for some large enough constant~$B$. Furthermore, we need the following definition for linear regions (polytopes).  

\begin{definition}
[Linear regions (polytopes) \cite{hanin2019complexity}]\label{def:activation-regions}
\textit{
Suppose that $\mN$ is a ReLU NN with $L$ hidden layers and input dimension $d$. 
An activation pattern of $\mN$ is a function $\mP$ from the set of neurons to the set $\{1,-1\}$, i.e., for each neuron $z$ in $\mN$, we have $\mP(z) \in \{1,-1\}$.   
Let $\theta$ be a fixed set of parameters in $\mN$, and $\mP$ be an activation pattern. Then the region corresponding to $\mP$ and $\theta$ is
$
\mR(\mP;\theta) :=
\{
X\in C(d,B):   z(X;\theta)\cdot {\mP(z)} >0
\},
$
where $z(X;\theta)$ is the pre-activation of a neuron $z$ in $\mN$. A linear region of $\mN$ at $\theta$ is a non-empty set $\mR(\mP,\theta)\neq \emptyset$ for some activation pattern $\mP$. Let $R_{\mN,\theta}$ be the number of linear regions of $\mN$ at $\theta$, i.e., 
$
R_{\mN,\theta} := \#\{  \mR(\mP;\theta): \mR(\mP;\theta)\neq \emptyset ~ \text{ for some activation pattern } \mP    \}.
$
Moreover, let $R_{\mN}:=\max_\theta R_{\mN,\theta}$ denote the maximum number of linear regions of~$\mN$ when $\theta$ ranges over $\mathbb{R}^{\#weights+\#bias}$.
}
\end{definition}

In the following, Preliminary \ref{pre:convex_poly} shows that polytopes generated by a ReLU network are convex. The detailed explanation of Preliminary \ref{pre:convex_poly} can be seen in Appendix A. Preliminary \ref{polytope_denotation} introduces how to denote a polytope by the linear functions associated with hyperplanes of the polytope. Preliminary \ref{hit_and_run} introduces the hit-and-run algorithm, a representative algorithm to count the \#faces of a polytope.

\begin{pre}[Polytopes of a neural network]
A neural network with ReLU activation partitions the input space into many polytopes (linear regions), such that the function represented by this neural network becomes linear when restricted in each polytope (linear region). Each polytope corresponds to a collection of activation states of all neurons, and each polytope is convex \cite{chu2018exact}. In this paper, we mainly focus on $(n_0-1)$-dim faces of a $n_0$-dim polytope. \textbf{For convenience, we just use the terminology face to represent an $(n_0-1)$-dim facet of an $n_0$-dim polytope.} 
\label{pre:convex_poly}
\end{pre}

\begin{pre}[Simplex and simplicial complex]
\label{def:simplex}
A \textbf{simplex} is just a generalization of the notion of triangles or tetrahedrons to any dimensions. More precisely, a $D$-simplex $S$ is a $D$-dimensional convex hull provided by convex combinations of  $D+1$ affinely independent vectors $\{\v_i\}_{i=0}^D \subset \mathbb{R}^D$. In other words, $\displaystyle S = \left\{ \sum_{i=0}^D \xi_i \v_i ~|~ \xi_i \geq 0, \sum_{i=0}^D \xi_i = 1 \right \}$.  
The convex hull of any subset of $\{\v_i\}_{i=0}^D$ is called a 
face of $S$. A \textbf{simplicial complex} $\displaystyle \mathcal{S} = \bigcup_\alpha S_\alpha$ is composed of a set of simplices $\{S_\alpha\}$ satisfying: 1) every face of a simplex from $\S$ is also in $\S$; 2) the non-empty intersection of any two simplices $\displaystyle S_{1},S_{2}\in \S$ is a face of both $S_1$ and $S_2$. 
A \textbf{triangulation of a polytope} $P$ is a partition of $P$ into simplices such that the union of all simplices equals $P$, and the intersection of any two simplices is a common face or empty. The triangulation of a polytope results in a simplicial complex.
\end{pre}

\begin{pre}[Complexity of the Shape of a Polytope]
We use the number of faces (\#faces) a polytope has to measure the complexity of its shape. We also count the number of highest dimensional simplices (\#simplices) a polytope encompasses as the intermediate results to bound the \#faces. \textbf{The maximum \#faces a polytope has is the total number of neurons ($\Gamma$) of a network. Unofficially, we define the simplicity threshold as $\Gamma/2$. If a polytope has \#faces smaller than $\Gamma/2$, it is deemed simple; otherwise, it is complicated.}   
\end{pre}

\begin{pre}[Denote a polytope by its hyperplanes]
A hyperplane in $\mathbb{R}^{d}$ is associated with a linear function $h(\x)$. We write $h^{+}=\left\{ \x\in\mathbb{R}^{d}:h(\x)\geq 0 \right\}$ and $h^{-}=\left\{ \x\in\mathbb{R}^{d}:h(\x)< 0 \right\}$. A region formed by $n$ hyperplanes $h_1,\ldots,h_n$ can be denoted as $\cap_{i=1}^{n}h_i^{\chi_i}$, $\chi_{i}\in\left\{+,-\right\}$. 
\label{polytope_denotation}
\end{pre}

\begin{pre}[Hit-and-run algorithm that counts \#faces]
In Supplementary Materials, we show that the linear region where a given input $\x$ lies corresponds to a group of inequalities determined by the activation states of all neurons. Mathematically, a polytope with the dimension $n_0$ is defined as $\{\x\in\mathbb{R}^{n_0}~|~\mathbf{a}_k \x^\top+b_k\leq 0, k \in [K]\}$. Each inequality corresponds to a hyperplane. However, not all hyperplanes are faces of the encompassed polytope. \textit{We call the inequalities that are faces of the encompassed polytope non-redundant inequalities.} \textbf{Thus, counting the \#faces of polytopes is equivalent to counting the number of non-redundant inequalities.} Although for high dimensional problems, it's difficult to find all the non-redundant inequalities in a short time, we can, however, apply probabilistic methods to find as many as possible to provide an effective estimation. There are various probabilistic methods to find necessary linear inequalities \cite{caron1997redundancy}. The hit-and-run algorithms is a representative Monte Carlo sampling method As Figure \ref{Figure_hit_and_run} shows, their basic idea is to randomly "hit" the boundaries of the polytope from its interior point. The interior point is exactly the given input $\x$.
\label{hit_and_run}
\end{pre}



\begin{figure}[htb!]
\vspace{-0.4cm}
\center{\includegraphics[width=0.5\linewidth] {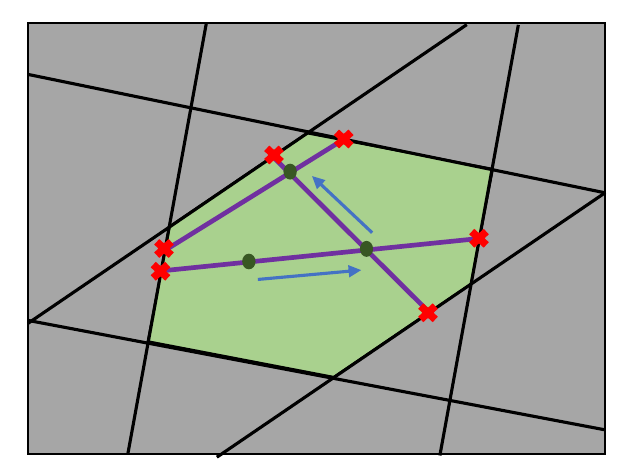}}
\caption{The Hit-and-Run algorithm that detects the faces of polytopes and counts them. }
\label{Figure_hit_and_run}
\vspace{-0.3cm}
\end{figure}

\section{Construction of Complicated Polytopes}
\label{sec:construction}

To form a clear basis of comparison for the polytopes being simpler, now we purposely design two networks as a representative example that partitions the space into many very complicated polytopes in the sense of either the average number of faces or the maximum number of faces. 

{\footnotesize $\bullet$} \textbf{The average \#faces.} Our core idea is to show that there exist parameters in a neuron to constrain the hyperplane into a specific domain such that the new neuron only creates another at least equally-complicated polytope inside a complicated polytope, without partitioning a region outside a complicated polytope. Then, in the average sense, polytopes are complex.

Let us use a two-dimensional example as shown in Figure \ref{Figure_counterexample}(a) to illustrate our idea. Suppose there are three neurons in the first hidden layer denoted as 
\begin{equation}
    z_i=\sigma(h_i(\x)) = \sigma(p_1^{(i)} x_1 +p_2^{(i)} x_2 +r^{(i)}), i=1,2,3.
\end{equation}
Without loss of generality, we assume three lines formed by these three neurons constitute a triangle, and the central triangular region $\Omega=h_1^{-}\cap h_2^{-}\cap h_3^{+}$, which means that only the third neuron is activated in $\Omega$. We prescribe the neuron in the second hidden layer computes 
\begin{equation}
    y(\x)=\sigma(-\mu_1 z_1-\mu_2 z_2-z_3+c),
\end{equation}
where $\mu_1,\mu_2>0$, and $c>0$. Let us see how $y(\x)$ cuts the space: 1) $y(\x)$ splits the regions $h_1^{+}\cap h_2^{-}\cap h_3^{+}$ and $h_1^{-}\cap h_2^{+}\cap h_3^{+}$ into two regions. However, as $\mu_1$ and $\mu_2$ increase, the blue and orange regions will become smaller. In the infinity limit, $y(\x)$ does not partition regions $h_1^{+}\cap h_2^{-}\cap h_3^{+}$ and $h_1^{-}\cap h_2^{+}\cap h_3^{+}$; $y(\x)$ divides $\Omega$ into two equally complicated polytopes. Thus, in terms of the average number of faces, polytope partitioning is complex. 

Now, let us formally provide our two-hidden-layer construction for $\mathbb{R}^d$:
\begin{equation}
    \begin{cases}
        & z_i=\sigma(h_i(\x)) = \sigma\left(\sum_{j=1}^d p_j^{(i)} x_j+r^{(i)}\right), i=1,\cdots,n  \\
        & y_i = \sigma(-\sum_{j=1}^{d-1}\mu_j^{(i)} z_j - z_d+b), i=1,\cdots,m \\
        & \mathrm{output} = y_1+y_2+\cdots+y_m,
    \end{cases}
\end{equation}
where we let all hyperplanes of $n$ neurons in the first hidden layer intersect at one vertex to form a cone with $n$ faces. Since the second layer only cuts this cone and does not generate extra polytopes outside the cone when $\mu_j, j\to \infty$, the average face number of polytopes is $\frac{n\cdot m+c_1}{c_2+m} \approx n$, when $m$ goes large, where $c_1$ is the number of faces and $c_2$ is the number of polytopes except for the $n$-face polytope. 

\begin{figure}[htb!]
\vspace{-0.2cm}
\center{\includegraphics[width=\linewidth]{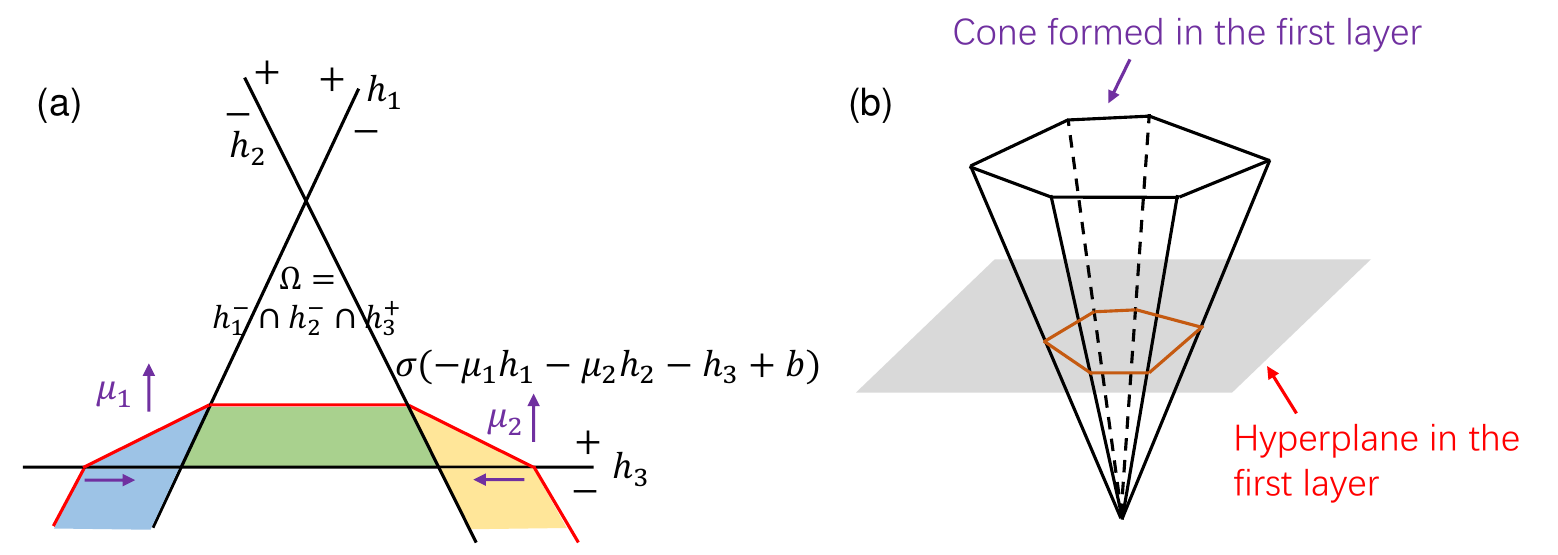}}
\caption{An explanatory graph of how to construct a network that partitions the space into complicated polytope in the average sense. A cone is generated by the first hidden layer, and neurons in the second hidden layer keep cutting the cone without cutting the regions outside the cone.}
\label{Figure_counterexample}
\vspace{-0.15cm}
\end{figure}
Notably, we can stack more layers whose neurons keep cutting the $\Omega$. Thus, this construction can be easily extended to an arbitrarily deep network.

{\footnotesize $\bullet$} \textbf{The maximum \#faces.} For a fully-connected network, the maximum \#faces a polytope can have is the number of neurons. To achieve this maximum, we need to show that there exists a parameter configuration that can make all neurons contribute to one polytope. 

Let us use a two-dimensional example as shown in Figure \ref{Figure_maximum} to illustrate our idea. Suppose there are three neurons in the first hidden layer denoted as 
\begin{equation}
    z_i=\sigma(h_i(\x)) = \sigma(p_1^{(i)} x_1 +p_2^{(i)} x_2 +r^{(i)}), i=1,\cdots,5.
\end{equation}
Without loss of generality, we assume five lines formed by these five neurons constitute a triangle, and the central triangular region $\Omega=h_1^{+}\cap h_2^{+}\cap h_3^{+}\cap h_4^{+}\cap h_5^{+}$, which means that all neurons are activated in $\Omega$. Next, as Figure \ref{Figure_maximum} shows, we select two points in the neighboring faces, respectively, to determine a face such that a neuron in the second hidden layer exactly forms this face. Suppose that this face is 
\begin{equation}
    t_1 x+t_2 y+s=0
\end{equation}
Let the neuron in the second hidden layer only take $z_3$ and $z_5$. 
\begin{equation}
    y(\x)=\sigma(\alpha z_3+\beta z_5+\gamma),
\end{equation}
where $\alpha, \beta, \gamma$ fulfill that
\begin{equation}
\begin{bmatrix}
p_1^{(3)} & p_1^{(5)} & 0\\
p_2^{(3)} & p_2^{(5)} & 0  \\
r^{(3)} & r^{(5)}& 1
\end{bmatrix} \begin{bmatrix}
\alpha\\
\beta\\
\gamma 
\end{bmatrix} = \begin{bmatrix}
t_1\\
t_2\\
s
\end{bmatrix}.
\end{equation}
When selecting a new pair of points, one can easily ensure that the number of faces must increase. Such a technique can generalize to high-dimensional inputs and arbitrary depth.  

\begin{figure}[htb!]
\vspace{-0.2cm}
\center{\includegraphics[width=\linewidth]{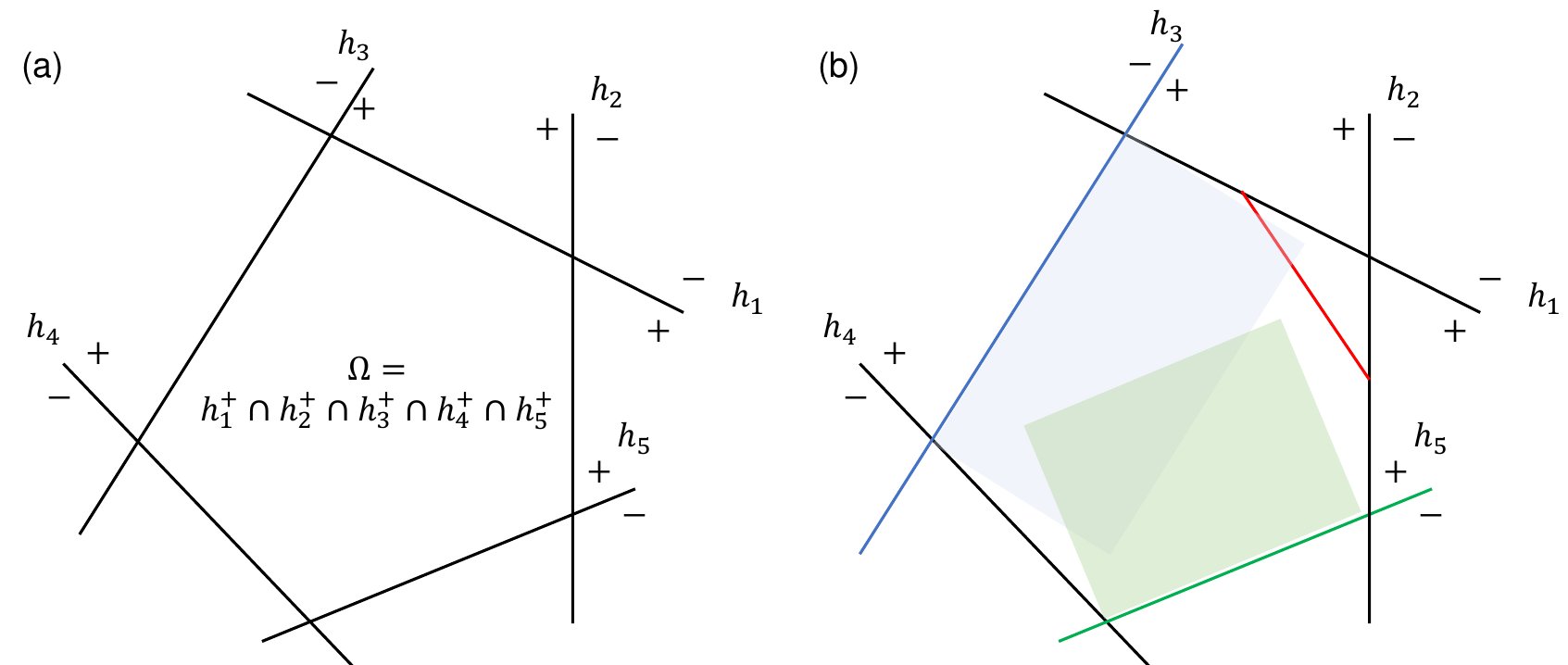}}
\caption{An explanatory graph of how to construct a network that partitions the space into complicated polytopes in the maximal sense. There exists a parameter configuration that can result in a polytope whose number of faces equals to the number of neurons in a network.}
\label{Figure_maximum}
\vspace{-0.35cm}
\end{figure}

\section{Deep ReLU Networks Have Simple Polytopes}
\label{sec:experiments}


In the last section, it can be seen that we need to tune neurons' weights to fulfill harsh conditions like very large weights and linear constraints such that a ReLU network can divide the space into complicated polytopes. However, a normally-trained ReLU network should not behave that way.

Along this line, by analyzing \#faces a polytope contains, we empirically observe that linear regions formed by ReLU networks are much simpler than the worst case under both initialization and gradient descent, which is a high-capacity-low-reality phenomenon and a new implicit bias, suggesting what simple solutions a deep network learns. We validate our findings comprehensively and consistently at different initialization methods, network depths, sizes of the outer bounding box, and biases. Furthermore, we showcase that during the training, although the number of linear regions increases, linear regions keep their simplicity. Lastly, our experiments are not only on low-dimensional inputs but also extended to high-dimensional inputs by Monte Carlo simulation.

\subsection{Initialization}

We validate four popular initialization methods: Xavier uniform, Xavier normal\footnote{\url{https://pytorch.org/docs/stable/nn.init.html}}, Kaiming, orthogonal initialization \cite{he2015delving}. For each initialization method, we use two different network architectures (3-40-20-1, 3-80-40-1). The bias values are set to 0.01 for all neurons. A total of 8,000 points are uniformly sampled from $[-1,1]^3$ to compute the polytope. At the same time, we check the activation states of all neurons to avoid counting some polytopes more than once. Each experiment is repeated five times. 

{\footnotesize $\bullet$} \textbf{Initialization methods}: Figure \ref{Figure_initialization_method} shows the histogram of the \#simplices each polytope has under different initialization methods. Hereafter, if no special specification, the x-axis of all figures denotes the number of faces a polytope has, and the y-axis denotes the count of polytopes with a certain number of faces.
The spotlight is that for all initialization methods and network structures, all polytopes are simple compared to the extreme they can reach. Moreover, comparing the network structure $3-80-40-1$ and $3-40-20-1$, it is observed that the number of faces polytopes have does not increase. The achieved polytope is far simpler than the theoretically most complicated polytope, which is $120$.

\begin{figure}[htb!]
\vspace{-0.3cm}
\center{\includegraphics[width=\linewidth]{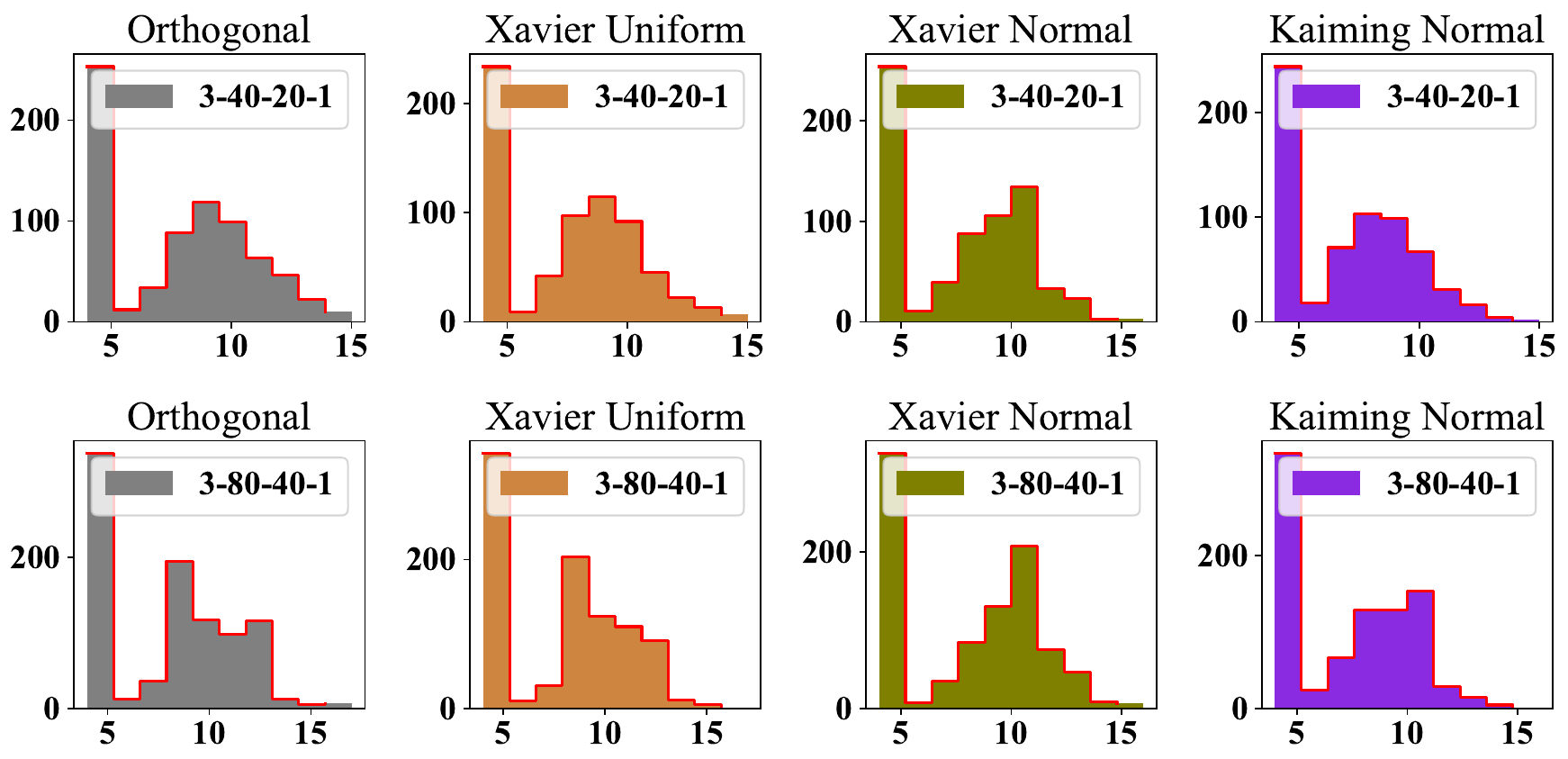}}
\caption{Deep ReLU networks have simple linear regions at different initialization methods. }
\label{Figure_initialization_method}
\vspace{-0.2cm}
\end{figure}

{\footnotesize $\bullet$} \textbf{Depths}: Here, we evaluate if the simplicity of polytopes still holds for deeper networks. This question is non-trivial, since a deeper network can theoretically generate more complicated polytopes. Will the depth break the simplicity? We choose four different widths (20, 40, 80, 160). For comprehensiveness, the network initialization methods are the Xavier uniform, Xavier normal, Kaiming, and orthogonal initialization. The depth is set to 5 and 8, respectively. The bias value is 0.01. Likewise, a total of 8,000 points are uniformly sampled from $[-1,1]^3$ to compute the polytope. At the same time, we check the activation states of all neurons to avoid counting some polytopes more than once. Each experiment is repeated five times. The results under the Xavier uniform initialization are shown in Figure \ref{Figure_depth_xu}, from which we draw three highlights. First, we find that both going deep and going wide can increase the number of polytopes at different initializations. But the effect of going deep is much more significant than that of going wide. Second, when the network goes deep, although the total number of polytopes increases, simple polytopes still dominate among all polytopes. Third, for different initialization methods and different depths, the dominating polytope is slightly different. For example, the dominating polytopes for the network 3-40-40-40-40-40-1 under Xavier normal initialization are those with 4$\sim$10 faces, far smaller than the specific constructions provided in the last subsection.

\begin{figure}[htb!]
\vspace{-0.3cm}
\center{\includegraphics[width=\linewidth] {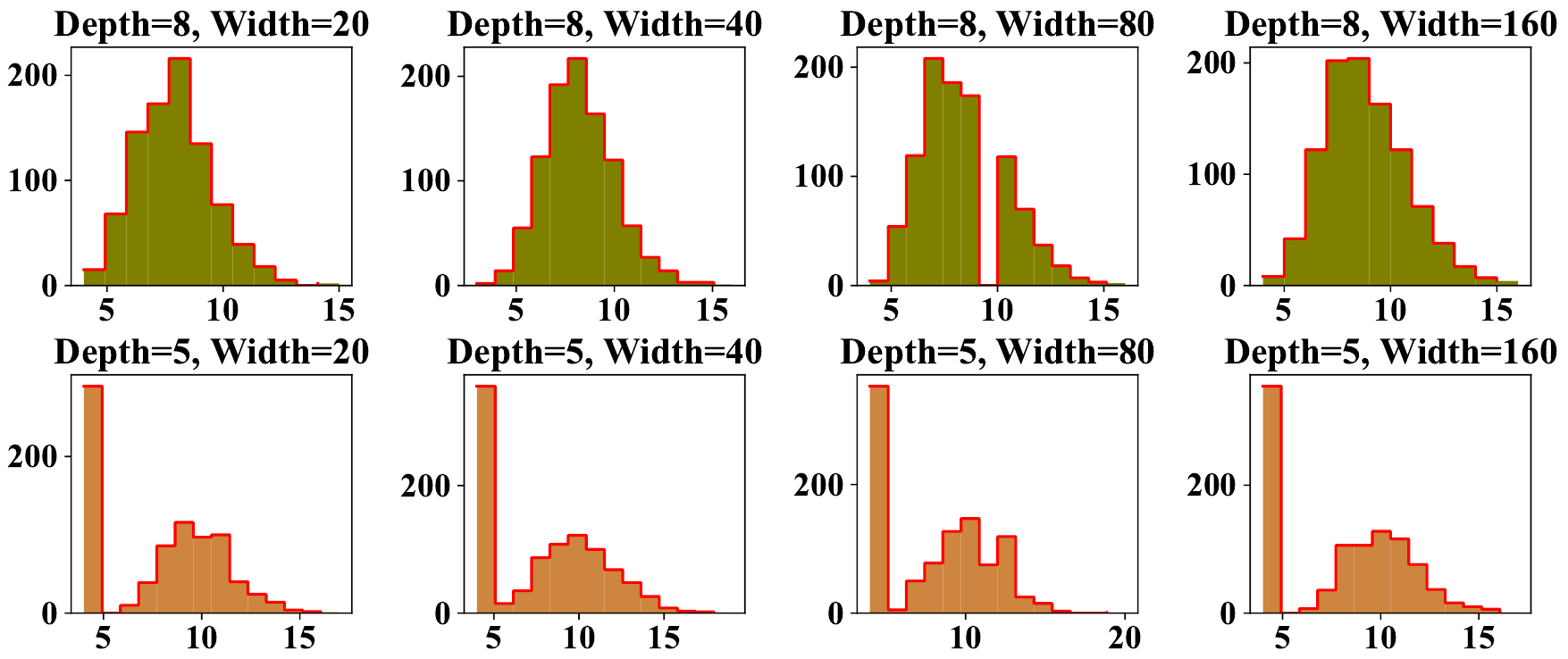}}
\caption{The simplicity holds true for deep networks. }
\label{Figure_depth_xu}
\vspace{-0.1cm}
\end{figure}

{\footnotesize $\bullet$} \textbf{Biases}: Here, we are curious about how the bias value of neurons will affect the distribution of polytopes. To address this issue, we set the bias values to $0, 0.01, 0.05, 0.1$, respectively for the network 3-80-40-1. The outer bounding box is $[-1,1]^3$. A total of 8,000 points are uniformly sampled from $[-1,1]^3$ to compute the polytope. At the same time, we check the activation states of all neurons to avoid counting some polytopes more than once. Each experiment is repeated five times. The initialization methods are the Xavier uniform, Xavier normal, Kaiming, and orthogonal initialization. Figure \ref{Figure_bias_xavier} is from the Xavier uniform. We observe that as the bias value increases, more polytopes are produced. However, the number of simple polytopes still takes up the majority. It is worthwhile mentioning that when the bias equals 0, the simplicity is clear. The bias=0 is the extremal case, where all hyperplanes of the first layer intersect at the origin, and much fewer faces in polytopes are generated.

\begin{figure}[htb!]
\vspace{-0.3cm}
\center{\includegraphics[width=\linewidth] {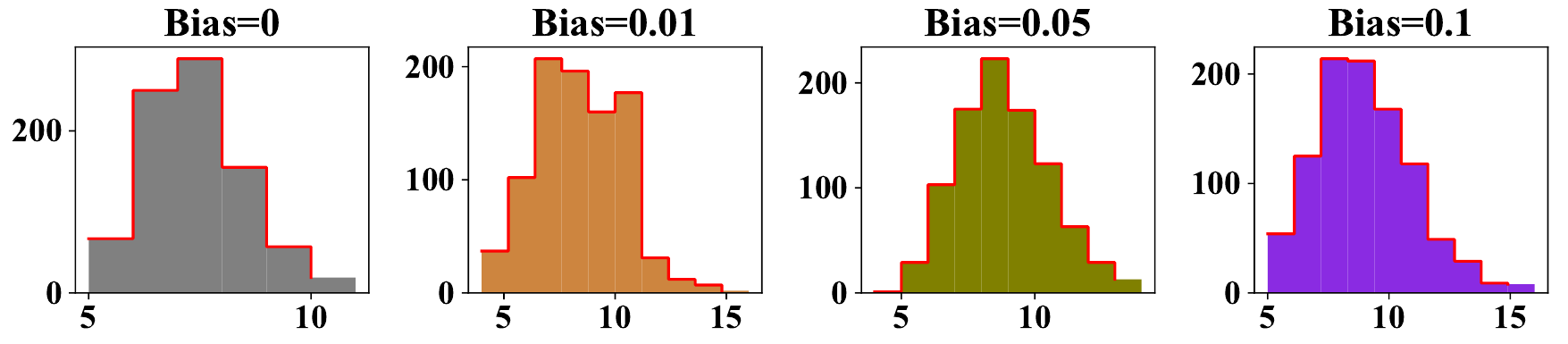}}
\caption{The simplicity holds true for different bias values under the orthogonal initialization. }
\label{Figure_bias_xavier}
\vspace{-0.35cm}
\end{figure}


\vspace{-0.3cm}
\subsection{Training}
\label{sec:dynamics}

Earlier, we show that at the initialization stage, deep networks exhibit simple linear regions. It is natural to ask \textit{will the simplicity of linear regions be broken during training}? We answer this question by training a fully-connected network using ReLU activation function on a real-world problem and counting the simplices of each polytope. The task is to predict if a COVID-19 patient will be at high risk, given one's health status, living habits, and medical history. This prediction task has 388,878 raw samples, and each has 5 medical features including `HIPERTENSION',`CARDIOVASCULAR', `OBESITY', `RENAL CHRONIC', `TOBACCO'. The labels are `at risk' or `no'. The detailed descriptions of data and this task can be referred to in Kaggle\footnote{\url{https://www.kaggle.com/code/meirnizri/covid-19-risk-prediction}}. The data are preprocessed as follows: The discrete value is assigned to different attributes. If a patient has that pre-existing disease or habit, 1 will be assigned; otherwise, 0 will be assigned. Then, the data are randomly split into training and testing sets with a ratio of 0.8:0.2. We implement a network of 5-20-20-1. The optimizer is Adam with a learning rate of 0.1. The network is initialized by Xavier uniform. The loss function is the binary cross-entropy function. The epoch number is 400 to guarantee convergence. A total of 8,000 points are uniformly sampled from $[-1,1]^3$ to compute the polytope. The outer bounding box is $[-5,5]^3$ to ensure as many polytopes as possible are counted.

\begin{figure}[htb!]
\center{\includegraphics[width=\linewidth] {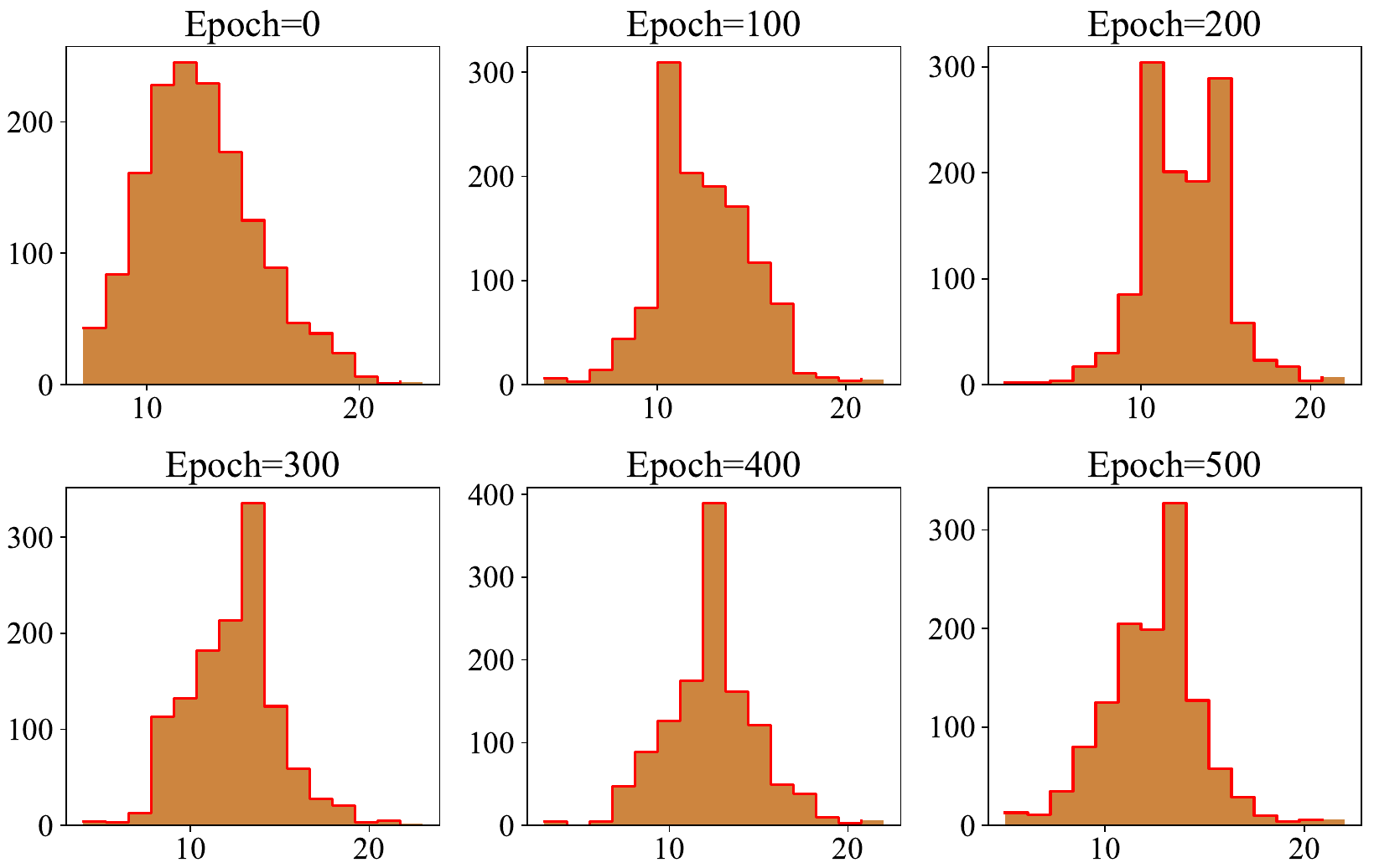}}
\caption{The results over a COVID dataset show that throughout the training, most polytopes are simple, despite that the number of linear regions drops during the training. }
\label{Figure_training}
\vspace{-0.3cm}
\end{figure}

Figure \ref{Figure_training} shows that as the training goes on, the total number of linear regions drops compared to the random initialization. It is observed that the number of polytopes with 10-13 faces goes up, and the number of polytopes with fewer than 8 faces goes down. It suggests that the network may be primarily using them to fit data. Meanwhile, it means polytopes generated by the network are slightly tending towards complexity as the training proceeds. However, after the training ends, the most complicated polytopes still have no more than 22 faces, which is approximately half of the number of neurons ($20+20=40$). As a result, we can still conclude that most polytopes are simple.

\subsection{Beyond Small Inputs and Fully-Connected Networks via Monte Carlo Simulation}

To prevent our observation from being biased by i) the input being so small, ii) the width of the hidden layers being so much larger than the input, and iii) networks being fully-connected, we need to \textit{empirically estimate the shape of polytopes for high-dimensional inputs}. Here, we compute the average \#faces produced by LeNet-5 trained on MNIST.

We use the above method to estimate the number of faces of polytopes generated by a modified LeNet-5 trained on the MINST dataset. The modification is removing one convolutional layer and replacing all activation functions with ReLU. We randomly generate 200 instances from a uniform distribution on $[0,1]^{28\times 28}$. For each instance, we iteratively apply the Hit-and-Run process to detect the faces of the polytope, and record every newly found faces. We set a checkpoint every 1000 iterations. Once the algorithm cannot find any new face in the last 1000 iterations, we consider it has found most of the faces of that polytope, and stop the process. The distributions of the number of faces we find and the number of iterations taken are shown in Figure \ref{fig:histogram_appendix}. As can be seen, among $26,796$ inequalities (The maximum number of faces a polytope can have), our algorithm finds $1,677$ faces on average for each polytope. On no polytopes, our algorithm can find more than $2,000$ boundaries before reaching the stopping criteria, which means all polytopes are simple compared to the maximum. Therefore, this result shows that, compared with the complex structure of the network, its polytopes indeed have much fewer faces. Figure \ref{fig:histogram_iter} shows the number of iterations it takes to identify the number of faces of each polytope. On average, after around $1.2\times 10^5$ iterations, the algorithm cannot find a new face.  

\begin{figure}[htb!]
    \centering
    \includegraphics[width=0.6\linewidth]{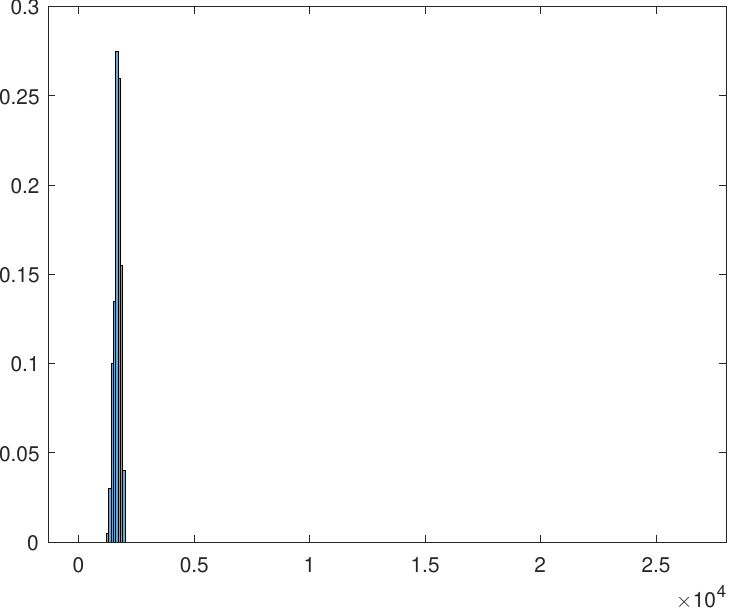}
    \caption{The distribution of the number of faces polytopes have. There are $26,796$ inequalities (The maximum number of faces a polytope can have). However, our algorithm finds that all polytopes have no more than $5,000$ faces. This means that polytopes are simple.}
    \label{fig:histogram_appendix}
\end{figure}

\begin{figure}[htb!]
\vspace{-0.2cm}
    \centering
    \includegraphics[width=0.6\linewidth]{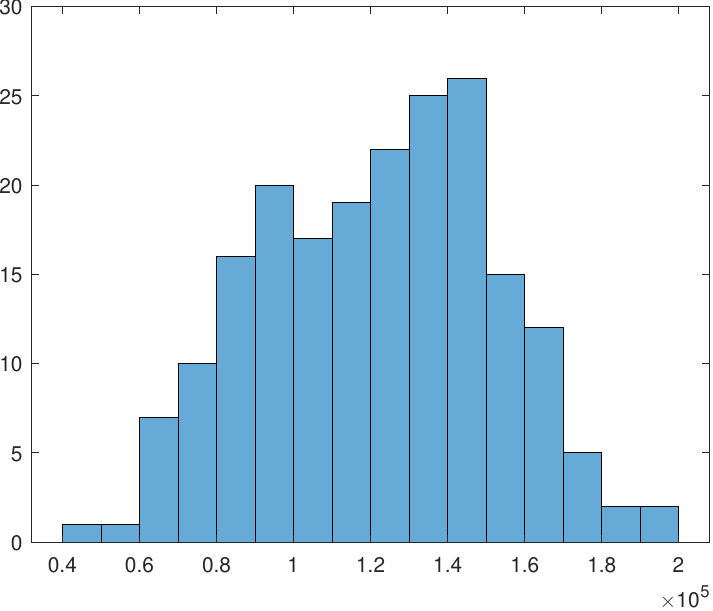}
    \caption{The number of iterations it takes to identify the number of faces of each polytope. On average, after around $1.2\times 10^5$ iterations, the algorithm cannot find a new face.}
    \label{fig:histogram_iter}
    \vspace{-0.2cm}
\end{figure}

\textbf{Visualization.} We also train networks on MNIST, following the same procedure in \cite{hanin2019deep}. Here, we visualize the polytopes in the cross-section plane. We initialize a network of size 784-7-7-6-10 with Kaiming normalization. The batch size is 128. The network is trained with Adam with a learning rate of 0.001. The total epoch number is set to 480, which ensures the convergence of the network.

Figure \ref{Figure_cross_section} shows the cross-section of the function learned by a network at different epochs. A cross-section is a plane that passes through a randomly-selected image $I$ from MNIST along two randomly-selected directions: $\alpha, \beta$. Mathematically, $\mathbf{I}'=\mathbf{I}+a\cdot \mathbf{\alpha} + b\cdot \mathbf{\beta}$, where $a$ and $b$ are scalars. Figure \ref{Figure_cross_section} shows that as the training goes on, the number of polytopes increases. But almost all the polytopes are triangles or quadrilaterals. Our basic assumption is that if the original polytope is complex, its cross-section is also complex. Therefore, we can conclude the simplicity of these polytopes based on the simplicity of cross-sectioned visualization.

\begin{figure}[htb!]
\center{\includegraphics[width=\linewidth] {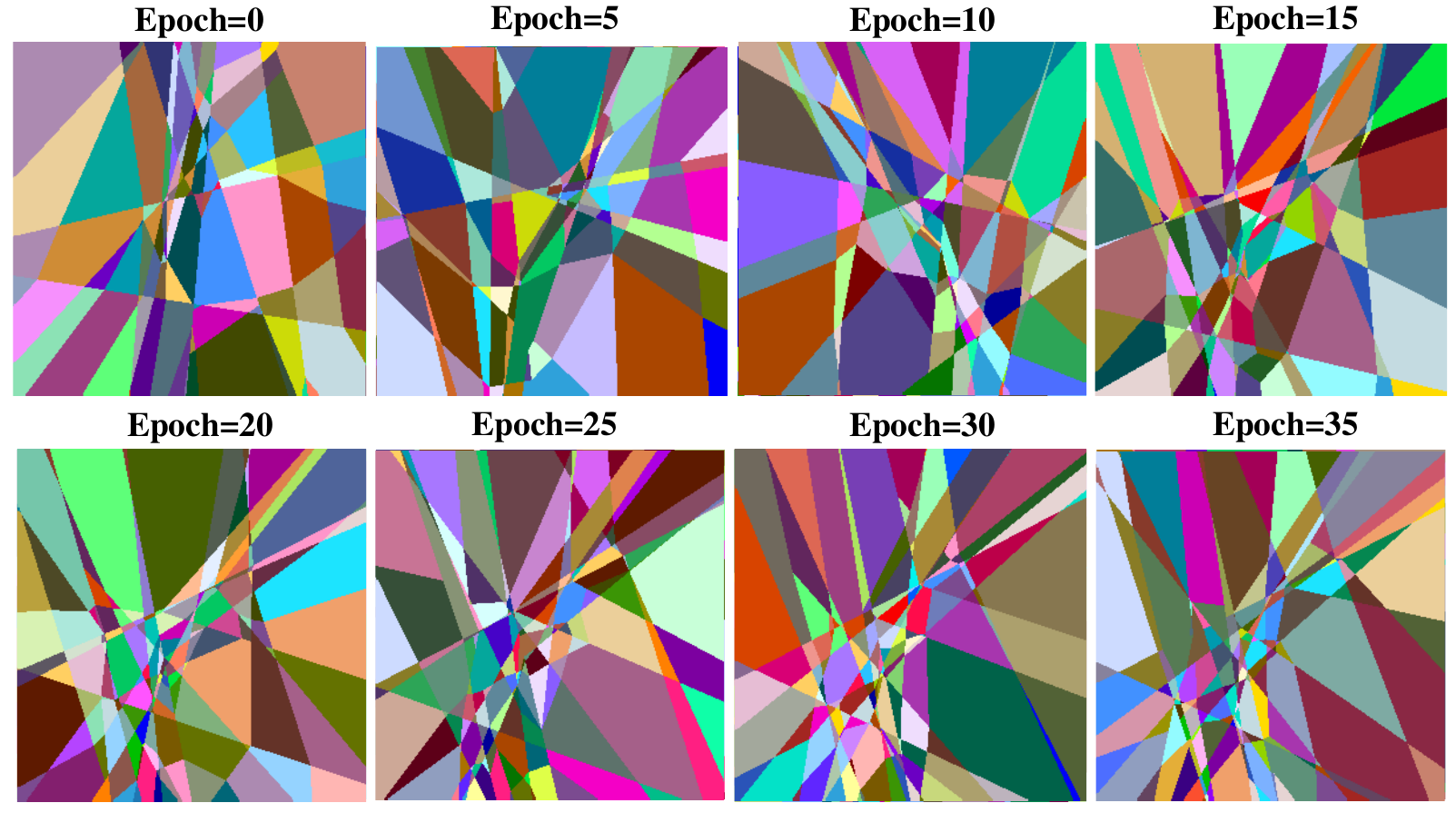}}
\caption{A cross-sectional visualization of the polytopes learned by a network over MNIST at different epochs. Almost all the polytopes are triangles or quadrilaterals.}
\label{Figure_cross_section}
\vspace{-0.5cm}
\end{figure}


\section{Theoretical Explanation}
\label{sec:math}

In this section, we seek to provide a theoretical explanation for the simple polytope phenomenon. We establish a theorem that bounds the average face numbers of polytopes of a network to a small number under some mild assumption, thereby substantiating our finding. Our theoretical derivation is twofold: initialization and after training. 

\textbf{Geometric heuristics of multi-layer networks.}
Generically, we argue that a deep ReLU network should still have simple polytopes. We think that the simplicity of polytopes is given rise to one reason.
Since as the depth increases, a ReLU network divides the space into many local polytopes, to yield a complicated polytope from a local polytope, two or more hyperplanes associated with neurons in the later layers should intersect within the given local polytope, which is hard because the area of polytopes is typically small. As such, the complexity of polytopes probably only increases moderately as the network goes deeper. 


We first estimate the bound of the maximum \#simplices as intermediate results to bound \#faces.

\vspace{-0.4cm}
\subsection{Bound of the Maximum \#Simplices}

\begin{thm}[Upper Bound]\label{thm:upper}
Let $\mN$ be a feedforward ReLU NN with $d$ input features and $L$ hidden layers with $n$ hidden neurons in each layer (with or without skip connections between different layers). Then the number of $d$-simplices in triangulations of all polytopes generated by $\mN$ is at most 
\begin{equation}
\frac{2n^{dL}}{(d-1)!(d!)^{L-1}} + \O(n^{dL-1}).
\end{equation}
In particular, if $L=1$, we derive the following upper bound for the maximum number of $d$-simplices 
$$
\#\text{simplices} \leq 
2n\sum_{i=0}^{d-1} \binom{n-1}{i}+2d \sum_{i=0}^{d-1} \binom{n}{i}. 
$$
\end{thm}

\begin{thm}[Lower Bound]
Let $\mN$ be a multi-layer fully-connected ReLU NN with $d$ input features and $L$ hidden layers with $n$ neurons in each layer. Then the maximum number of $d$-simplices in triangulations of polytopes generated by $\mN$ is at least $$\frac{n^{dL}}{d^{d(L-1)}d!}+\O(n^{dL-1}).$$ Furthermore, if $L=1$, we derive the following tighter lower bound for the maximum number of $d$-simplices $$
\#\text{simplices} \geq
\frac{2n}{d+1}\sum_{i=0}^{d-1} \binom{n-1}{i}.
$$
\label{thm:lower_bound_main_body}
\vspace{-0.4cm}
\end{thm}

\begin{proof}[Proof of Theorem \ref{thm:upper}] Directly by Theorems \ref{thm:1} and \ref{thm:m2}. 
\end{proof}

\begin{proof}[Proof of Theorem \ref{thm:lower_bound_main_body}] Directly by Theorems \ref{thm:2} and \ref{thm:m3}. 
\end{proof}

It is straightforward to see $(d-1)!(d!)^{L-1}< d^{d(L-1)}d!$; therefore, the above upper bound is strictly higher than the lower bound.
The basic idea to derive the above upper bound depends on the following observation: for each $(d-1)$-dim face of a $d$-dim polytope, it can only be a face for one unique simplex in a triangulation of this polytope, thus the total number of $d$-simplices in triangulations of polytopes must be smaller than or equal to the total number of $(d-1)$-dim faces in all polytopes. Therefore, we just need to derive the upper bound for the total number of $(d-1)$-dim faces in all polytopes generated by a neural network $\mN$, which can be done by induction on the number of layers of $\mN$. 
For the lower bound, we use the fact that each $d$-simplex with dimension $d$ has $d+1$ faces, thus the number of $d$-simplices should be at least the total number of $(d-1)$-dim faces in all polytopes divided by $d+1$.  

Our method to transfer the problems of calculating the above number of $d$-simplices to calculating the total number of $(d-1)$-dim faces in all polytopes is very versatile, and thus can be applied to many complicated architectures such as fully-connected NNs, CNNs, and ResNets \cite{he2016deep}. Actually, we can always calculate the total number of faces in all polytopes layer by layer, by considering each face and finding out how many new faces it is divided into by new hyperplanes from the next layer. 

Let's recall some basic knowledge on hyperplane arrangements \cite{stanley2004introduction}. Let $V$ be an Euclidean space. A hyperplane in the Euclidean space $V \simeq \mathbb{R}^n$, is a subspace $H := \{X \in V: \alpha \cdot X=b\}$,
where $\0\neq \alpha\in V$,  $b\in \mathbb{R}$ and $``\cdot"$ denotes the inner product. 
A {\em region} of an arrangement $\mA = \{H_i\subset V:1\leq i\leq m\}$ is just a connected component in the complement set of the union of all hyperplanes in the arrangement $\mA$. Let $r(\mA)$ be the number of regions for an arrangement $\mA$. Also, a \textit{simplex} in an $n$-dimensional Euclidean space is just a $n$-dimensional polytope that is the convex hull of $n + 1$ vertices. For example, a triangle is a simplex in $\mathbb{R}^2$, and a tetrahedron is a simplex in $\mathbb{R}^3$. A \textit{triangulation} on some polytope is a division of the polytope into simplices. 

The following Zaslavsky's Theorem is very crucial in the estimation of the number of linear regions. 
\begin{lem}[Zaslavsky's Theorem \cite{zaslavsky1975facing,stanley2004introduction}]\label{thm:ZaslavskyNN}
    Let $\mA$ be an arrangement with $m$ hyperplanes in
    $\R^{n}$. Then, the number $r(\mA)$ of regions for the arrangement $\mA$ satisfies
    \begin{eqnarray} \label{eq:region_general1}
        r(\mA)\leq\sum_{i=0}^{n} \binom{m}{i}.
\end{eqnarray} 
Furthermore, the above equality holds iff $\mA$ is in general position \cite{Stanley04anintroduction}.
\end{lem}

\textbf{Main results - One Layer ReLU NNs.}\label{sec: one-layer}
Throughout this paper, we always assume that the input space of an NN is a $d$-dimensional hypercube $C(d,B):= \{\x=(x_1,x_2,\ldots,x_d)\in \mathbb{R}^d: -B\leq x_i \leq B \}$ for some large enough constant~$B$.
Note that for a one-layer fully-connected ReLU NN, the pre-activation of each hidden neuron is an affine linear function of input values. Based on the sign of the pre-activation, each hidden neuron produces a hyperplane that divides the input space into two linear regions. On the other hand, the $d$-dimensional hypercube $C(d,B)$ has $2d$ hyperplanes in its boundary.

\begin{thm}\label{thm:1}
Let $\mN$ be a one-layer feedforward ReLU NN with $d$ input features and $n$ hidden neurons. Then the number of $d$-simplices in triangulations of polytopes generated by $\mN$ is at most 
$$
2n\sum_{i=0}^{d-1} \binom{n-1}{i}+2d \sum_{i=0}^{d-1} \binom{n}{i}  .
$$
\end{thm}
\begin{proof}
Let $H_1,H_2,\ldots,H_{n}$ be the $n$ hyperplanes generated by $n$ hidden neurons and $H_{n+1},H_{n+2},\ldots,H_{n+2d}$ be the $2d$ hyperplanes in the boundary of $C(d,B)$. Then for each $1\leq i \leq n$, the hyperplane $H_i$ may be intersected by other $n-1$ hyperplanes in $H_1,H_2,\ldots,H_{n}$. This will produce at most $n-1$ hyperplanes in $H_i$, thus by Theorem \ref{thm:ZaslavskyNN}, it will divide $H_i$ into at most $\sum_{i=0}^{d-1} \binom{n-1}{i}$ pieces since $H_i$ is a $(d-1)$-dim hyperplane.
Also, for each $1\leq i \leq 2d$, the hyperplane $H_{n+i}$ may be intersected by $H_1,H_2,\ldots,H_{n}$. This will produce at most $n$ $(d-2)$-dim hyperplanes in $H_{n+i}$, thus by Theorem \ref{thm:ZaslavskyNN}, it will divide $H_i$ into at most $\sum_{i=0}^{d-1} \binom{n}{i}$ pieces since $H_i$ is a $(d-1)$-dim hyperplane.
Moreover, each piece could be a face of two linear regions, finally, we will get at most 
$$
2n\sum_{i=0}^{d-1} \binom{n-1}{i}+2d \sum_{i=0}^{d-1} \binom{n}{i}  
$$ faces for all the polytopes. 
On the other hand, each simplex in a triangulation of polytope can be corresponding to at least one face in the polytope, and each face in the polytope can be corresponding to exactly one simplex. Therefore, the total number of $d$-simplices must be smaller than or equal to the total number of faces in all polytopes. Thus we obtain that the number of $d$-simplices in triangulations of polytopes generated by $\mN$ is also at most 
$$
2n\sum_{i=0}^{d-1} \binom{n-1}{i}+2d \sum_{i=0}^{d-1} \binom{n}{i}.
$$
\end{proof}

The following results give a lower bound for the maximum number of $d$-simplices in a triangulation of a one layer fully-connected ReLU NN.

\begin{thm}\label{thm:2}
Let $\mN$ be a one-layer fully-connected ReLU NN with $d$ input features and $n$ hidden neurons. If $n$ corresponding hyperplanes are in general position and $C(d,B)$ is large enough, then the number of $d$-simplices in a triangulation of polytopes among all $n$ corresponding hyperplanes is at least 
$$
\frac{2n}{d+1}\sum_{i=0}^{d-1} \binom{n-1}{i}=\frac{2n^d}{(d+1)(d-1)!}+\O(n^{d-1}).
$$
\end{thm}
\begin{proof}
Let $H_1,H_2,\ldots,H_{n}$ be $n$ hyperplanes generated by $n$ hidden neurons. Then for each $1\leq i \leq n$, the hyperplane $H_i$ will be intersected by other $n-1$ hyperplanes in $H_1,H_2,\ldots,H_{n}$. This will produce exact $n-1$ hyperplanes in $H_i$ since $H_1,H_2,\ldots,H_{n}$ are in general position, thus by Theorem \ref{thm:ZaslavskyNN}, it will divide $H_i$ into exact $\sum_{i=0}^{d-1} \binom{n-1}{i}$ pieces since $H_i$ is a $(d-1)$-dim hyperplane. When $C(d,B)$ is large enough, we can assume that every such a piece has a non-empty intersection with $C(d,B)$. Therefore, the total sum of number of $(d-1)$-faces of all linear regions (polytopes) will be at least $2n\sum_{i=0}^{d-1} \binom{n-1}{i}$ since every piece is counted twice. 
On the other hand, every $d$-dim simplex has $d+1$ distinct $(d-1)$-dim faces, thus every triangulation with $N$ simplices will contain $N(d+1)$ number $(d-1)$-dim faces. Therefore, if a triangulation of all linear regions (polytopes) of $\mN$ contains $N$ simplices, then
$$
N(d+1)\geq 2n\sum_{i=0}^{d-1} \binom{n-1}{i}
$$
and thus
$$
N\geq\frac{2n}{d+1}\sum_{i=0}^{d-1} \binom{n-1}{i}.
$$
Finally, we derive that a triangulation of all linear regions (polytopes) of $\mN$ contains at least $\frac{2n}{d+1}\sum_{i=0}^{d-1} \binom{n-1}{i}=\frac{2n^d}{(d+1)(d-1)!}+\O(n^{d-1})$ simplices.
\end{proof}



\textbf{Main results - Multi-Layer ReLU NNs.}\label{sec: multi-layer}
To study the multi-layer NNs, we need the following results from \cite[Proposation 3]{montufar2017notes}.
\begin{lem}[\cite{montufar2017notes}]\label{lem:m1}
Let $\mN$ be a multi-layer fully-connected ReLU NN with $d$ input features and $L$ hidden layers with $n_1,n_2,\ldots,n_L$ hidden neurons. Then the number of polytopes of $\mN$ is at most 
$\prod_{i=1}^{L}\sum_{j=0}^{m_i}  \binom{n_i}{j}$,
where $m_i=\min\{ d,n_1,n_2,\ldots,n_i \}$.
\end{lem}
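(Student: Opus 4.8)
The plan is to prove the bound by induction on the number of hidden layers, refining the linear regions one layer at a time and controlling each refinement with a rank-sensitive form of Zaslavsky's theorem (Lemma~\ref{thm:ZaslavskyNN}). Write $\mN_i$ for the network obtained by truncating $\mN$ after its $i$-th hidden layer, and let $P_i$ be the set of linear regions of $\mN_i$ (in the sense of Definition~\ref{def:activation-regions}) inside $C(d,B)$; the goal is $\#P_i \le \prod_{k=1}^{i}\sum_{j=0}^{m_k}\binom{n_k}{j}$, which for $i=L$ is the claim. The base case $\#P_0 = 1$ is trivial. For the inductive step, observe first that every region of $\mN_i$ is contained in a unique region $R \in P_{i-1}$, because a connected set on which all neurons of layers $1,\dots,i$ have fixed activation in particular has all neurons of layers $1,\dots,i-1$ fixed; so it suffices to bound, for each fixed $R \in P_{i-1}$, the number of linear regions of $\mN_i$ lying in $R$.

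Fix such an $R$. On $R$ every ReLU in layers $1,\dots,i-1$ acts linearly, i.e. $\mathbf{a}^{(\ell)} = \mathbf{c}^{(\ell)} \circ \mathbf{z}^{(\ell)}$ with $\mathbf{c}^{(\ell)}$ constant on $R$, so the pre-activation $\mathbf{z}^{(i)}$ of the $i$-th layer restricted to $R$ is a single affine map $\x \mapsto A_i\x + c_i$ with $A_i \in \mathbb{R}^{n_i\times d}$. Its $k$-th coordinate vanishing cuts $R$ along at most one hyperplane, so the regions of $\mN_i$ inside $R$ are exactly the nonempty cells of $R$ relative to an arrangement $\mA_R$ of at most $n_i$ hyperplanes, and their number is at most $r(\mA_R)$ (intersecting the cells of an arrangement with a fixed convex set can only decrease the count). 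The key quantitative step is to bound $r(\mA_R)$ by $\sum_{j=0}^{m_i}\binom{n_i}{j}$ rather than the naive $\sum_{j=0}^{d}\binom{n_i}{j}$: the matrix $A_i$ is the composition of the (affine, because restricted to $R$) layer maps $\x \mapsto \mathbf{a}^{(1)} \mapsto \cdots \mapsto \mathbf{a}^{(i-1)} \mapsto \mathbf{z}^{(i)}$, which pass through coordinate spaces of dimensions $d, n_1, \dots, n_{i-1}$ and land in $\mathbb{R}^{n_i}$, whence $\rank(A_i) \le \min\{d, n_1, \dots, n_i\} = m_i$. An arrangement of $n_i$ hyperplanes whose normals span an $r$-dimensional space is a cylinder over the essential rank-$r$ arrangement in that span (write $\mathbb{R}^d = W \oplus W^{\perp}$ with $W = \operatorname{span}$ of the normals; each hyperplane equals $(H\cap W)\oplus W^{\perp}$), so it has the same number of regions as a rank-$r$ arrangement in $\mathbb{R}^{r}$, which by Lemma~\ref{thm:ZaslavskyNN} is at most $\sum_{j=0}^{r}\binom{n_i}{j} \le \sum_{j=0}^{m_i}\binom{n_i}{j}$. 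Hence each $R\in P_{i-1}$ is split into at most $\sum_{j=0}^{m_i}\binom{n_i}{j}$ pieces, giving $\#P_i \le \#P_{i-1}\cdot\sum_{j=0}^{m_i}\binom{n_i}{j}$ and closing the induction.

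I expect the main obstacle to be the honest justification of the rank bound $\rank(A_i)\le m_i$ together with its legitimate use: one must check that on the region $R$ the composed network map really is affine with the claimed factorization (this is exactly the content of the reductions in Appendix~\ref{app:computation_polytopes}, that $\tilde W^{(\ell)}$ and hence $\hat W^{(0:\ell)}$ are constant on $R$), and one must be careful that replacing the true, possibly "clipped", subdivision of the bounded convex set $R$ by the full hyperplane arrangement in $\mathbb{R}^d$ only over-counts. A secondary bookkeeping point is verifying that the layer-by-layer refinement described above indeed produces precisely the linear regions of $\mN$ (each cell being an intersection of the convex set $R$ with a convex cell of $\mA_R$, hence convex and connected), so that the telescoping product bound legitimately applies to $\#P_L$.
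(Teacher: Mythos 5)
Your proof is correct, and it is essentially the standard argument behind this result: the paper itself does not prove Lemma~\ref{lem:m1} but imports it from \cite{montufar2017notes}, where the proof proceeds exactly as you describe, by layer-wise refinement of regions together with the rank bottleneck $\rank(A_i)\le m_i$ and the rank-sensitive form of Zaslavsky's bound (Lemma~\ref{thm:ZaslavskyNN} applied to the essential arrangement in the span of the normals). The points you flag as potential obstacles are indeed the only delicate ones, and your handling of them (affineness of the composed map on a fixed activation region, and the fact that restricting the arrangement to $R$ and dropping degenerate "hyperplanes" can only reduce the count of realized strict sign patterns) is sound, so the telescoping product bound goes through.
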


\begin{thm}\label{thm:m2}
Let $\mN$ be a multi-layer feedforward ReLU NN with $d$ input features and $L$ hidden layers with $n$ hidden neurons in each layer (with or without skip connections between different layers). Then the number of $d$-simplices in triangulations of polytopes generated by $\mN$ is at most 
\begin{align}\label{eq:m2}
\frac{2n^{dL}}{(d-1)!(d!)^{L-1}} + \O(n^{dL}-1).
\end{align}
\end{thm}
\begin{proof}
First, we prove by induction that the total number of faces generated by $\mN$ is at most 
$$
\frac{2n^{dL}}{(d-1)!(d!)^{L-1}} + \O(n^{dL}-1).
$$
The case $L=1$ is proved in Theorem \ref{thm:1}. When $L\geq 2$, we assume that Eq. (\ref{eq:m2}) holds for $L-1$. Thus by Lemma \ref{lem:m1}, and the induction hypothesis, the network $\mN'$ with the first $L-1$ layers already has 
$$
\frac{n^{d(L-1)}}{(d!)^{L-1}} + \O(n^{d(L-1)-1})
$$
linear regions and 
$$
\frac{2n^{d(L-1)}}{(d-1)!(d!)^{L-2}} + \O(n^{d(L-1)-1})
$$
faces for all polytopes.  Then when we add the $L$-th layer, for each polytope $R$ with $f_R$ faces in $\mN'$, the $n$ neurons and the $f_R$ faces generate at most $n+f_R$ hyperplanes in $R$ (with or without skip connections between different layers, since the skip connections will not generate more hyperplanes or polytopes), similar to Theorem \ref{thm:1} these generates $$
2n\sum_{i=0}^{d-1} \binom{n-1}{i}+f_R \sum_{i=0}^{d-1} \binom{n}{i}  
$$ faces for all the polytopes in $R$. Therefore, we obtain that the total number of faces is at most
\begin{align*}
& ~~~~~~ 2n\sum_{i=0}^{d-1} \binom{n-1}{i} 
\cdot \left( \frac{n^{d(L-1)}}{(d!)^{L-1}} + \O(n^{d(L-1)-1}) \right) 
\\&~~~~~+ 
\sum_{i=0}^{d-1} \binom{n}{i}  \sum_{R} f_R
\\&=
2n\sum_{i=0}^{d-1} \binom{n-1}{i} 
\cdot \left( \frac{n^{d(L-1)}}{(d!)^{L-1}} + \O(n^{d(L-1)-1}) \right) 
\\&~~~~~+ 
\sum_{i=0}^{d-1} \binom{n}{i} \cdot  \left(\frac{2n^{d(L-1)}}{(d-1)!(d!)^{L-2}} + \O(n^{d(L-1)-1})\right)
\\&=
\frac{2n^{dL}}{(d-1)!(d!)^{L-1}} + \O(n^{dL}-1).
\end{align*}

Therefore, the total number of $d$-simplices must be smaller than or equal to the total number of faces in all polytopes. Thus we obtain that the number of $d$-simplices in triangulations of polytopes generated by $\mN$ is also at most 
$$
\frac{2n^{dL}}{(d-1)!(d!)^{L-1}} + \O(n^{dL}-1).
$$
\end{proof}

On the other hand, by the following lemma, it is easy to derive the maximum number of $d$-simplices in triangulations of polytopes generated by multi-layer NNs.

\begin{lem}[\cite{montufar2014number}]\label{lem:lower_bound_region}
Let $\mN$ be a multi-layer fully-connected ReLU NN with $d$ input features and $L$ hidden layers with $n_l$ hidden neurons in the $l$-th layer. Then the maximum number of linear regions of $\mN$ is at least $\prod_{l=1}^{L-1}\left\lfloor\frac{n_l}{d}\right\rfloor^{d}
\sum_{j=0}^{d}  \binom{n_L}{j}$.
\end{lem}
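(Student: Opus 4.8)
This is Theorem~4 of \cite{montufar2014number}, so the plan is to reconstruct their \emph{fold‑and‑arrange} lower‑bound construction: choose the weights and biases so that the first $L-1$ hidden layers repeatedly fold the input hypercube onto itself in a many‑to‑one, piecewise‑affine fashion, and so that the last hidden layer realizes a hyperplane arrangement in general position; the linear‑region count of the composition then multiplies across the folds and is finished off by Zaslavsky's Theorem. First I would fix, for each $l\le L-1$, a partition of the $n_l$ neurons of layer $l$ into $d$ groups of size $p_l:=\lfloor n_l/d\rfloor$ (discarding the leftover neurons). Using the $p_l$ ReLU units in the $j$‑th group I build a one‑dimensional continuous piecewise‑linear ``sawtooth'' acting only on coordinate $x_j$: a function $s_{l,j}\colon[0,1]\to[0,1]$ with at least $p_l$ linear pieces, each of which is mapped affinely and bijectively onto $[0,1]$, so that a generic point of $[0,1]$ has $p_l$ preimages and every activation stays in $[0,1]$ so the fold can be iterated. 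Acting coordinatewise, layer $l$ then realizes a map $F_l\colon[0,1]^d\to[0,1]^d$ that subdivides $[0,1]^d$ into $p_l^{\,d}$ sub‑boxes, each mapped affine‑bijectively onto the whole cube.

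Next I would show that region counts multiply under such compositions: if $g=g_2\circ g_1$ and $g_1$ maps a region $R$ affinely and bijectively onto a region $R'$, then the linear regions of $g$ lying in $R$ are exactly the $g_1$‑preimages of the linear regions of $g_2$ lying in $R'$, so there are as many of them in $R$ as $g_2$ has in $R'$. Applying this inductively to $F_1,\dots,F_{L-1}$ shows that after the first $L-1$ layers the input cube has been cut into $\prod_{l=1}^{L-1}p_l^{\,d}$ sub‑regions, each carried affine‑bijectively onto $[0,1]^d$ by the composition. Then I choose the $n_L$ neurons of the last hidden layer so that their hyperplanes are in general position and, by taking $B$ large enough, every cell of the arrangement meets the cube; by the equality case of Zaslavsky's Theorem (Lemma~\ref{thm:ZaslavskyNN}) this arrangement has exactly $\sum_{j=0}^{d}\binom{n_L}{j}$ regions inside $[0,1]^d$. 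Pulling these cells back through the $\prod_{l=1}^{L-1}p_l^{\,d}$ folded copies and composing with any nonzero output map produces a network realizing at least $\prod_{l=1}^{L-1}\lfloor n_l/d\rfloor^{d}\sum_{j=0}^{d}\binom{n_L}{j}$ linear regions, which lower‑bounds the maximum.

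The hard part will be making the sawtooth step fully rigorous inside the ReLU formalism: verifying that $p_l$ ReLU units in a single layer can actually implement a sawtooth of multiplicity $p_l$ with every intermediate activation confined to $[0,1]$ (so the folds genuinely compose layer by layer, with the necessary affine combinations absorbed into the next layer's weight matrix), and then handling the boundary bookkeeping in the composition step — the ``affine bijection on each piece'' only holds up to a measure‑zero set, so one must check that linear regions of the later sub‑network lift cleanly and that no regions are lost on the boundaries of the folds. A secondary technical point is ensuring that the general‑position arrangement of the last layer has all $\sum_{j=0}^{d}\binom{n_L}{j}$ of its cells intersecting each folded copy of the cube, which is again where the hypothesis that $B$ is large is used.
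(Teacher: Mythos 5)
This lemma is not proved in the paper at all: it is imported verbatim from \cite{montufar2014number} and used as a black box to obtain the lower bound on \#simplices, so there is no in-paper argument to compare against. Your reconstruction is essentially the original space-folding proof of Mont\'ufar et al.\ (partition each of the first $L-1$ layers' neurons into $d$ groups of $\lfloor n_l/d\rfloor$ units building per-coordinate sawtooths, multiply region counts across the affine-bijective folds, and finish with a general-position arrangement of the last layer counted by Zaslavsky's theorem, Lemma~\ref{thm:ZaslavskyNN}), and it is correct; the only imprecision is the one you already flag: the individual ReLU activations need not stay in $[0,1]$ --- it is the effective per-coordinate sawtooth, an alternating-sign linear combination of the $\lfloor n_l/d\rfloor$ units that is absorbed into the next layer's weights, that must map $[0,1]$ onto $[0,1]$, and this is exactly how the original proof handles it.
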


For the lower bounds, we have the following results.

\begin{thm}\label{thm:m3}
Let $\mN$ be a multi-layer fully-connected ReLU NN with $d$ input features and $L$ hidden layers with $n$ neurons in each layer. Then the maximum number of $d$-simplices in triangulations of polytopes generated by $\mN$ is at least $$\frac{n^{dL}}{d^{d(L-1)}d!}+\O(n^{dL-1}).$$ 
\end{thm}
\begin{proof}
By Lemma \ref{lem:lower_bound_region}, the maximum number of linear regions is lower bounded by $\left(\frac{n}{d}\right)^{d(L-1)}\sum_{i=0}^{d} \binom{n}{i}=\frac{n^{dL}}{d^{d(L-1)}d!}+\O(n^{dL-1})$. Also, the number of $d$-simplices should be larger than or equal to the number of linear regions. Thus we obtain the number of $d$-simplices in a triangulation of polytopes among all $n$ corresponding hyperplanes is at least $\frac{n^{dL}}{d^{d(L-1)}d!}+\O(n^{dL-1})$.
\end{proof}

We empirically validate our bounds in Table \ref{tab:verify_bound} with 4 structures. For a network structure X-$Y_1$-$\cdots$-$Y_h$-$\cdots$-$Y_H$-1, X represents the dimension of the input, and $Y_h$ is the number of hidden neurons in the $h$-th hidden layer. For a given MLP architecture, we initialize all the parameters based on the Xavier uniform initialization. Because all network structures we validate have a limited number of neurons, we can compute polytopes and their simplices by enumerating all collective activation states of neurons, which ensures that all polytopes are identifiable. For each structure, we repeat initialization ten times to report the maximum \#simplices. As shown in Table \ref{tab:verify_bound}, the derived upper bound is compatible with the numerical results of several network structures, which verifies the correctness of our results.

\begin{table}[htbp]
\centering
  \caption{Numerically verify the correctness of the derived upper and lower bounds for the maximum \#simplices.}
\scalebox{1}{\begin{tabular}{c|c|c|c|c}
\hline
        & 3-7-1 & 3-8-1 & 3-9-1 & 3-10-1\\ \hline
    Upper Bounds by Theorem \ref{thm:upper}  &   482 & 686 & 942 & 1256\\ \hline
    Enumeration Method & 446 & 663 & 893 & 1140\\ 
    \hline
    Lower Bounds by Theorem \ref{thm:lower_bound_main_body} & 77 & 116 & 166 & 230\\
 \hline
\end{tabular}}
\label{tab:verify_bound}
\end{table}


\textbf{Comparison of Different Network Architectures}\label{sec: comparison} Here, we compare the maximum $d$-\#simplices based on bounds obtained in the above. Our conclusion is that deep NNs usually have a larger number of $d$-simplices than shallow NNs with the same number of parameters.

First, let's fix some notations.
For two functions $f(n)$ and  $g(n)$, we write $f(n) = \Theta(g(n)) $ if there exists some positive constants $c_1,c_2$ such that $c_1 g(n) \leq f(n) \leq c_2 g(n)$ for all sufficiently large $n$; $f(n) = \O(g(n))$ if there exists some positive constant $c>0$ such that $f(n) \leq c g(n)$ for all sufficiently large $n$; and $f(n) = \Omega(g(n)) $ if there exists some positive constant $c$ such that $f(n) \geq c g(n)$ for all sufficiently large $n$.

The number of parameters for the fully-connected ReLU NN $\mN$ is easy to compute \cite[Proposition 7]{pascanu2013number}. 
\begin{lem}
\label{prop:number_params}
Let $\mN$ be a multi-layer fully-connected ReLU NN with $d$ input features and $L$ hidden layers with $n$ hidden neurons in each layer. Then the number of parameters in $\mN$ is
$
 \Theta(Ln^2).
$
\end{lem}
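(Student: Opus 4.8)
The plan is to count the trainable parameters directly, layer by layer, and then isolate the dominant term. First I would list the weight matrices of $\mN$: the input-to-first-hidden matrix $W^{(0)}$ has size $n\times d$ and contributes $dn$ weights; for each $l\in\{1,\dots,L-1\}$ the hidden-to-hidden matrix $W^{(l)}$ has size $n\times n$ and contributes $n^{2}$ weights, for a subtotal of $(L-1)n^{2}$; and the last-hidden-to-output matrix has size $c\times n$, where $c$ is the (fixed) output dimension, contributing $cn$ weights. The biases add $n$ per hidden layer together with $c$ for the output, i.e. $Ln+c$ in total. Summing everything, the total number of parameters $P$ of $\mN$ satisfies
\[
P \;=\; (L-1)n^{2} \;+\; (d+c)n \;+\; Ln \;+\; c \;=\; (L-1)n^{2} + \O(Ln),
\]
since $d$ and $c$ are fixed constants.

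Next I would read off the two-sided bound. For the upper bound, $P \le Ln^{2} + C_{0}Ln$ for an absolute constant $C_{0}$, so for all sufficiently large $n$ the linear-in-$n$ term is dominated and $P \le 2Ln^{2}$, giving $P=\O(Ln^{2})$. For the matching lower bound I would use that the relevant regime is $L\ge 2$, whence $(L-1)n^{2}\ge \tfrac{1}{2}Ln^{2}$ and, the remaining terms being nonnegative, $P\ge \tfrac12 Ln^{2}$, i.e. $P=\Omega(Ln^{2})$. Combining the two inequalities yields $P=\Theta(Ln^{2})$, as claimed.

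There is no genuinely hard step here; the only points requiring a word of care are that the input dimension $d$ and the output width $c$ must be treated as fixed constants, so that their contributions are honestly lower order, and that the statement is understood for multi-layer networks ($L\ge 2$), since for $L=1$ one has $P=\Theta(n)$ rather than $\Theta(n^{2})$. Under the paper's standing convention of deep, wide networks this is immaterial; if a bound uniform in $L\ge1$ were desired one could instead write $P=\Theta\!\big(\max\{Ln^{2},\,n\}\big)$.
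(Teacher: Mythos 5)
Your direct layer-by-layer count is correct and is exactly the argument the paper relies on (the lemma is simply cited from Proposition~7 of \cite{pascanu2013number}, whose content is this same elementary parameter count). Your care in treating $d$ and the output width as constants and in noting that the $\Theta(Ln^{2})$ bound requires $L\geq 2$ (consistent with the lemma's ``multi-layer'' hypothesis) is appropriate and introduces no gap.
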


Let $S_{\mN_1}$ be the maximum number of $d$-simplices in triangulations of polytopes generated by $\mN$.
Now we can derive the number of $d$-simplices per parameter for deep NNs and their shallow counterparts. The following result follows directly from Lemma \ref{prop:number_params}, Theorem \ref{thm:upper} and Theorem \ref{thm:lower_bound_main_body}.
\begin{thm}\label{th:asy_compare}
Let $\mN_1$ be a multi-layer fully-connected ReLU NN with $d$ input features and $L$ hidden layers with $n$ hidden neurons in each layer, and $d=\O(1)$. Then $\mN_1$ has $\Theta (Ln^2)$ parameters, and the ratio of $S_{\mN_1}$ to the number of parameters of $\mN_1$ is 
$$
\frac{S_{\mN_1}}{\# \text{ parameters of } \mN_1}= 
\Omega \Bigl( \frac{1}{L} \cdot \frac{n^{dL-2}}{d^{d(L-1)}d!} \Bigr).
$$
For a one-layer fully-connected ReLU NN $\mN_2$ with $d$ input features and $Ln^2$ hidden neurons, it has $\Theta (Ln^2)$ parameters, and the ratio for $\mN_2$ is 
$$
\frac{S_{\mN_2}}{\# \text{ parameters of } \mN_2}=
\O\left( \frac{(Ln^2)^{d-1}}{(d-1)!}   \right).
$$
\end{thm}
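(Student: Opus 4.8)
The plan is to assemble the statement directly from three earlier results: the parameter count of Lemma \ref{prop:number_params}, the lower bound on the maximum number of simplices for deep networks (Theorem \ref{thm:lower_bound_main_body}), and the upper bound for one-layer networks (the $L=1$ case of Theorem \ref{thm:upper}, equivalently Theorem \ref{thm:1}). Nothing genuinely new has to be proved; the whole task is to divide the appropriate quantities and to keep track of which terms are asymptotically negligible once $d$ is held fixed.

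For the network $\mN_1$ I would first invoke Lemma \ref{prop:number_params} to conclude that $\mN_1$ has $\Theta(Ln^2)$ parameters. Next, Theorem \ref{thm:lower_bound_main_body} yields $S_{\mN_1} \geq \frac{n^{dL}}{d^{d(L-1)}d!} + \O(n^{dL-1})$; since $d=\O(1)$, the $\O(n^{dL-1})$ term is a lower-order correction to the leading term, so $S_{\mN_1} = \Omega\!\left(\frac{n^{dL}}{d^{d(L-1)}d!}\right)$. Dividing the two estimates gives
\[
\frac{S_{\mN_1}}{\#\text{ parameters of }\mN_1}
= \Omega\!\left( \frac{1}{Ln^2}\cdot\frac{n^{dL}}{d^{d(L-1)}d!} \right)
= \Omega\!\left( \frac{1}{L}\cdot\frac{n^{dL-2}}{d^{d(L-1)}d!} \right),
\]
which is the asserted bound.

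For $\mN_2$, note that Lemma \ref{prop:number_params} does not apply verbatim (it concerns $L$ hidden layers of equal width $n$), so I would count the parameters of a one-hidden-layer network with $m := Ln^2$ hidden neurons directly: the input-to-hidden weight matrix contributes $dm$ entries, the hidden biases $m$, and the hidden-to-output weights another $m$ (plus one output bias), for a total of $\Theta(m) = \Theta(Ln^2)$ since $d=\O(1)$ and the output dimension is $\O(1)$. Then Theorem \ref{thm:upper} with $L=1$ gives $S_{\mN_2} \leq 2m\sum_{i=0}^{d-1}\binom{m-1}{i} + 2d\sum_{i=0}^{d-1}\binom{m}{i}$; the dominant contribution is $2m\binom{m-1}{d-1} = \frac{2m^d}{(d-1)!} + \O(m^{d-1})$ and every remaining term is $\O(m^{d-1})$, so $S_{\mN_2} = \O\!\left(\frac{m^d}{(d-1)!}\right)$. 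Dividing by $\Theta(m)$ produces
\[
\frac{S_{\mN_2}}{\#\text{ parameters of }\mN_2}
= \O\!\left(\frac{m^{d-1}}{(d-1)!}\right)
= \O\!\left(\frac{(Ln^2)^{d-1}}{(d-1)!}\right),
\]
as claimed.

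The only delicate point --- and the closest thing to an obstacle in an otherwise mechanical argument --- is the asymptotic bookkeeping: one must keep $d$ fixed so that each binomial sum collapses to its top term $\binom{m}{d-1}\sim m^{d-1}/(d-1)!$, so that constant factors (the factor $2$, factors of $d$) may legitimately be absorbed into $\O(\cdot)$ and $\Omega(\cdot)$, and so that the lower-order error terms inherited from Theorems \ref{thm:lower_bound_main_body} and \ref{thm:upper} are genuinely dominated by the stated leading terms for all sufficiently large $n$ (respectively $m$). With that care taken, both displayed ratios follow immediately.
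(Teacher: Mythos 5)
Your proposal is correct and follows essentially the same route as the paper, whose proof is simply the observation that the statement "follows directly from Lemma \ref{prop:number_params}, Theorem \ref{thm:upper} and Theorem \ref{thm:lower_bound_main_body}"; you merely spell out the division and the asymptotic bookkeeping for fixed $d$, including the sensible direct parameter count $\Theta(Ln^2)$ for the one-hidden-layer network $\mN_2$.
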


From Theorem \ref{th:asy_compare} we obtain that $\frac{S_{\mN_1}}{\# \text{ parameters of } \mN_1}$ grows at least exponentially fast with the depth $L$ and polynomially fast with the width $n$.
In contrast, $\frac{S_{\mN_2}}{\# \text{ parameters of } \mN_2}$ grows at most polynomially fast with the numbers $L$ and $n$.

Therefore, we have that $\frac{S_{\mN_1}}{\# \text{ parameters of } \mN_1}$ is far larger than $\frac{S_{\mN_2}}{\# \text{ parameters of } \mN_2}$ when $L$ and $n$ are sufficiently large.
Thus we conclude that fully-connected ReLU NNs usually generate much more number of $d$-simplices than one-layer fully-connected ReLU NNs with asymptotically the same number of input dimensions and parameters. This result suggests that fully-connected ReLU NNs usually have much more expressivity than one-layer fully-connected ReLU NNs.

\subsection{Initialization}

\begin{thm}[One-hidden-layer NNs]\label{thm:sec6_1}
Let $\mN$ be a one-hidden-layer fully-connected ReLU NN with $d$ inputs and $n$ hidden neurons, where $d$ is a fixed positive integer. Suppose that $n$ hyperplanes generated by $n$ hidden neurons are in general position. Let $C(d,B):= [-B,B]^d$ be the input space of $\mN$ where $B$ is large enough. Then the average number of faces in linear regions of $\mN$ is at most $2d+\O(\frac{1}{n})$. In particular, when $n>2d^2+d$, the above bound becomes $2d+1$.
\label{thm:initial}
\end{thm}

\begin{proof}[Proof of Theorem \ref{thm:initial}]
By Theorem \ref{thm:upper}, we obtain that the number of $d$-simplices in triangulations of polytopes generated by $\mN$ is at most
$
\#\text{simplices} \leq 
2n\sum_{i=0}^{d-1} \binom{n-1}{i}+2d \sum_{i=0}^{d-1} \binom{n}{i}.
$
We know that the number of $(d-1)$-dim faces is no more than the number of $d$-simplices. On the other hand, since the $n$ hidden neurons are in general position and $B$ is large enough, we obtain that the total number of polytopes (i.e., linear regions) produced by $\mN$ is 
$\sum_{i=0}^{d} \binom{n}{i}.$
Therefore,  the average number of faces in linear regions of $\mN$ is at most
\begin{equation}
\begin{aligned}
     \frac{
2n\sum_{i=0}^{d-1} \binom{n-1}{i}+2d \sum_{i=0}^{d-1} \binom{n}{i}}{\sum_{i=0}^{d} \binom{n}{i}} \\
\leq 
\frac{
2n\sum_{i=0}^{d-1} \binom{n-1}{i}+2d \sum_{i=0}^{d-1} \binom{n}{i}}{\sum_{i=0}^{d-1} \binom{n}{i+1}}.
\label{Bound:1d}
\end{aligned}
\end{equation}
For each $0\leq i \leq d-1$, we have 
\begin{align*}
&\frac{2n\cdot\binom{n-1}{i}+2d\binom{n}{i}}{\binom{n}{i+1}} \\
\leq & 2(i+1) + \frac{2d(i+1)}{n-i}= 2(i+1) \left(1+\frac{d}{n-i}\right) \\\leq & 2d \left(1+\frac{d}{n-d+1}\right) =2d+\O(\frac{1}{n}).
\end{align*}
Therefore, the average number of faces in linear regions of $\mN$ is at most $2d+\O(\frac{1}{n})$. Furthermore,  when $n>2d^2+d$, the above bound becomes $2d+1$. 
\end{proof}

\begin{thm}[Multi-layer NNs, $d=2$]\label{thm:sec6_2}
Let $\mN$ be an $L$-layer fully-connected ReLU NN with $d=2$ inputs and $n_i$ hidden neurons in the $i$-th hidden layer. Let $C(d,B):= [-B,B]^d$ be the input space of $\mN$.  Furthermore, assume that $n_i$ and $B$ are large enough, then the average number of faces in linear regions of $\mN$ is at most $2d=4$.
\end{thm}

\begin{proof}
When $d=2$, the average number of faces can be naturally bounded. Let us start with a quadrilateral and add lines to it, then after adding one line in some linear region, the number of regions increases by $1$ and the total number of edges increases by at most $4$, thus the total number of edges is at most 4 times the number of linear regions, thus the average edge number is at most $4$ for the case $d=2$.
\end{proof}


\begin{thm}[Multi-layer NNs with Zero Biases]\label{thm:sec6_2_bias=0}
Let $\mN$ be an $L$-layer fully-connected ReLU NN with $d$ inputs and $n_i=n$ hidden neurons in the $i$-th hidden layer where $d$ and $n$ are two fixed positive integers. Suppose that all the biases of $\mN$ are equal to zero. Let $C(d,B):= [-B,B]^d$ be the input space of $\mN$.  Furthermore, assume that the number of hidden neurons and $B$ are large enough, then the average number of faces in linear regions of $\mN$ is at most $3d-2+\O(\frac{1}{n})$. In particular, there exists some constant  $C_d$ determined by $d$, such that when $n>C_d$, the above bound becomes $3d-1$.
\label{mNN_zero_biases}
\end{thm}

Let $\#\mA$ be the number of hyperplanes in an arrangement $\mA$ and $\rank(\mA)$ be the dimension of the space spanned by the normal vectors of the hyperplanes in $\mA$. An arrangement $\mA$ is called {\em central} if $\bigcap_{H\in\mA}  H\neq\emptyset$. Then we have the following results.

\begin{lem}[{Theorems 2.4 and 2.5 from \cite{Stanley04anintroduction}}]\label{lem:equality_hold}
Let $\mA$ be an arrangement in an $n$-dimensional vector space. Then we have  \[ r(\mA) = \sum_{\substack{\mB\subseteq\mA\\  \mB \text{ central}} } (-1)^{\#\mB-\rank(\mB)}. \] 
\end{lem}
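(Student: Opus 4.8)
\textbf{Proof proposal for Lemma \ref{lem:equality_hold}.}

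The statement is attributed to Stanley's \emph{An Introduction to Hyperplane Arrangements} (Theorems 2.4 and 2.5), so the plan is essentially to reconstruct that argument via the characteristic polynomial and Whitney's theorem. Let me recall the setup: for an arrangement $\mA$ in an $n$-dimensional vector space, the characteristic polynomial is $\chi_\mA(t) = \sum_{x\in L_\mA} \mu(\hat 0, x)\, t^{\dim x}$, where $L_\mA$ is the intersection poset. The number of regions is $r(\mA) = (-1)^n \chi_\mA(-1)$ by Zaslavsky's theorem. So the first step is to invoke Zaslavsky to replace $r(\mA)$ by $(-1)^n\chi_\mA(-1)$, reducing the problem to an identity about $\chi_\mA$.

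The second step is Whitney's theorem, which expresses the characteristic polynomial as a sum over \emph{central} (equivalently, \emph{intersecting}) subarrangements: $\chi_\mA(t) = \sum_{\substack{\mB\subseteq\mA\\ \mB\text{ central}}} (-1)^{\#\mB}\, t^{n-\rank(\mB)}$. Here $\rank(\mB)$ is exactly the codimension of $\bigcap_{H\in\mB}H$, matching the paper's definition as the dimension of the span of the normal vectors. The proof of Whitney's theorem is a standard inclusion–exclusion / Möbius inversion over the Boolean lattice of subarrangements, grouping terms by their common intersection; I would cite it rather than reprove it, though if a self-contained argument is wanted, one sets up $\chi_\mA(t) = \sum_{x} \left(\sum_{\mB:\, \cap\mB = x}(-1)^{\#\mB}\right) t^{\dim x}$ and identifies the inner sum with $\mu(\hat 0,x)$ via the cross-cut / boolean-lattice Möbius computation.

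The third step is just to substitute $t = -1$ into Whitney's formula and multiply by $(-1)^n$:
\begin{align*}
r(\mA) = (-1)^n \chi_\mA(-1) = (-1)^n \sum_{\substack{\mB\subseteq\mA\\ \mB\text{ central}}} (-1)^{\#\mB} (-1)^{n-\rank(\mB)} = \sum_{\substack{\mB\subseteq\mA\\ \mB\text{ central}}} (-1)^{\#\mB-\rank(\mB)},
\end{align*}
since $(-1)^{2n} = 1$. This yields the claimed identity. (One should note the convention $\mB = \emptyset$ is central with $\rank(\emptyset)=0$, contributing $+1$, which is consistent.)

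The main obstacle, such as it is, is not in the final manipulation but in deciding how much of Whitney's theorem to reprove versus cite; since the paper already quotes Stanley for this lemma, I would keep the proof to the three citations above (Zaslavsky, Whitney, and the sign substitution) and only expand the Möbius-inversion step if a referee demands self-containedness. A minor point to be careful about: one must confirm that the ambient space being $n$-dimensional (rather than the arrangement being essential) does not affect the formula — it does not, because the extra global factor $t^{n-\rank(\mA)}$ in $\chi_\mA$ is automatically accounted for by the $t^{n-\rank(\mB)}$ exponents in Whitney's sum, and the $(-1)^n$ normalization in Zaslavsky's theorem is stated for arrangements in $n$-space regardless of essentiality.
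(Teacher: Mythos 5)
Your proposal is correct and matches the paper's treatment: the paper gives no independent proof but cites exactly Stanley's Theorem 2.4 (Whitney's theorem for $\chi_\mA$) and Theorem 2.5 (Zaslavsky's $r(\mA)=(-1)^n\chi_\mA(-1)$), whose combination via the substitution $t=-1$ is precisely what you reconstruct. The sign bookkeeping and the remark on the empty (central) subarrangement are both handled correctly, so nothing further is needed.
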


\begin{lem}\label{lem:bias0}
Let $\mA$ be an arrangement with $m$ hyperplanes in
    $\R^{n}$. If all the hyperplanes in $\mA$ pass through the origin, and any $n$ normal vectors of $n$ hyperplanes in $\mA$ are linearly independent, then we have  \[ r(\mA) = \binom{m-1}{n-1} + \sum_{i=0}^{n-1} \binom{m}{i}. \] 
\end{lem}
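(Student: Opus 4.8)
The plan is to apply Lemma \ref{lem:equality_hold} and then carry out a short binomial-coefficient computation. First I would use the hypothesis that all hyperplanes pass through the origin: this makes the origin a common point of \emph{every} sub-arrangement $\mB\subseteq\mA$, so $\bigcap_{H\in\mB}H\ni\0$ and hence every subset $\mB$ of $\mA$ is central. Therefore the sum in Lemma \ref{lem:equality_hold} ranges over all $2^m$ subsets of $\mA$, and
\[
r(\mA) \;=\; \sum_{\mB\subseteq\mA} (-1)^{\#\mB-\rank(\mB)}.
\]

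The next step is to pin down $\rank(\mB)$ for each subset using the linear-independence hypothesis. If $\#\mB=k$ with $k\le n$, then the $k$ normal vectors of the hyperplanes in $\mB$ are linearly independent (any $n$, and hence any $k\le n$, of the $m$ normals of $\mA$ are independent), so $\rank(\mB)=k$ and the term contributes $(-1)^{k-k}=1$. If $k>n$, then $\mB$ contains $n$ linearly independent normals, so its normals span all of $\R^{n}$ and $\rank(\mB)=n$, contributing $(-1)^{k-n}$. Grouping the $2^m$ terms by the cardinality $k=\#\mB$ gives
\[
r(\mA) \;=\; \sum_{k=0}^{n}\binom{m}{k} \;+\; \sum_{k=n+1}^{m}(-1)^{k-n}\binom{m}{k}.
\]

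Finally I would simplify the right-hand side with standard identities. From $\sum_{k=0}^{m}(-1)^k\binom{m}{k}=0$ (for $m\ge 1$) one gets $\sum_{k=n+1}^{m}(-1)^{k-n}\binom{m}{k}=-\sum_{k=0}^{n}(-1)^{k-n}\binom{m}{k}=(-1)^{n+1}\sum_{k=0}^{n}(-1)^{k}\binom{m}{k}$, and the partial alternating-sum identity $\sum_{k=0}^{n}(-1)^{k}\binom{m}{k}=(-1)^{n}\binom{m-1}{n}$ (a one-line induction on $n$ using Pascal's rule) turns this into $-\binom{m-1}{n}$. Hence
\[
r(\mA) \;=\; \sum_{k=0}^{n}\binom{m}{k}-\binom{m-1}{n}
\;=\; \sum_{k=0}^{n-1}\binom{m}{k}+\left(\binom{m}{n}-\binom{m-1}{n}\right)
\;=\; \sum_{i=0}^{n-1}\binom{m}{i}+\binom{m-1}{n-1},
\]
the last equality being Pascal's rule $\binom{m}{n}-\binom{m-1}{n}=\binom{m-1}{n-1}$, which is exactly the claimed formula.

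The only genuinely delicate point is the rank computation in the second step: one must argue carefully from the "any $n$ normals are linearly independent" hypothesis that $\rank(\mB)=\min(\#\mB,n)$ for \emph{every} subset $\mB$, since the whole simplification hinges on the sign $(-1)^{\#\mB-\rank(\mB)}$ depending only on $\#\mB$. After that, everything is routine binomial bookkeeping. A few degenerate regimes ($m<n$, or $m=0$) make some of the sums vacuous and can be checked directly; in each case the formula still holds, so they do not require separate treatment beyond a remark.
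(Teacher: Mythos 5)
Your proposal is correct and follows essentially the same route as the paper's proof: invoke Lemma \ref{lem:equality_hold}, note that every sub-arrangement is central because all hyperplanes contain the origin, use the independence hypothesis to get $\rank(\mB)=\min\{\#\mB,n\}$, and then group by cardinality and simplify; you merely spell out the binomial bookkeeping (the partial alternating-sum identity and Pascal's rule) that the paper leaves implicit.
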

\begin{proof}
Since all the hyperplanes in $\mA$ pass through the origin, then the intersection of all hyperplanes in $\mA$ is not empty, thus each $\mB\subseteq\mA$ must be central. 
Since any $n$ normal vectors of $n$ hyperplanes in $\mA$ are linearly independent, we have 
$$
\rank(\mB) = \min\{n,\#\mB\}.
$$
Therefore, by Lemma \ref{lem:equality_hold} we obtain
\begin{align*}
r(\mA) &= \sum_{\substack{\mB\subseteq\mA\\  \mB \text{ central}} } (-1)^{\#\mB-\rank(\mB)} 
\\&=
\sum_{i=0}^n \binom{m}{i} + \sum_{i=n+1}^m (-1)^{i-n} \binom{m}{i}
\\&=
\binom{m-1}{n-1} + \sum_{i=0}^{n-1} \binom{m}{i}.
\end{align*}
\end{proof}

By Lemma \ref{lem:bias0} we can derive the proof of Theorem \ref{mNN_zero_biases}.

\begin{proof}[Proof of Theorem \ref{mNN_zero_biases}]
Since all the biases of $\mN$ are equal to zero, then all the hyperplanes produced by the hidden neurons pass through the origin. Assume that the number of such hyperplanes is $n$ and they form an arrangement $\mA$. Then by Lemma \ref{lem:bias0} we obtain 
\begin{align*}
r(\mA) &= \binom{n-1}{d-1} + \sum_{i=0}^{d-1} \binom{n}{i} \\&= \frac{2}{(d-1)!} n^{d-1} + \O\left( n^{d-2}\right).
\end{align*}
On the other hand, for each $H\in \mA$, it will be intersected by other $n-1$ hyperplanes in $\mA$. This will produce $n-1$ hyperplanes in $H$, thus by Lemma \ref{lem:bias0}, it will divide $H$ into $\frac{2}{(d-2)!} n^{d-2} + \O\left( n^{d-3}\right)$ pieces since $H$ is a $(d-1)$-dim hyperplane. Similarly, for each hyperplane in the boundary of $C(d,B)$, it will be divided into 
$\frac{1}{(d-1)!} n^{d-1} + \O\left( n^{d-2}\right)$
pieces by $n$ hyperplanes in $\mA$.
Therefore, the total number of faces of linear regions formed by $\mA$ is at most
$$
2n\cdot\frac{2}{(d-2)!} n^{d-2} + 2d\cdot \frac{1}{(d-1)!} n^{d-1} + \O\left( n^{d-2}\right),
$$
which is equal to
$$
\left(\frac{4}{(d-2)!} +  \frac{2d}{(d-1)!}\right) n^{d-1} + \O\left( n^{d-2}\right).
$$
Finally, the average number of faces in linear regions of $\mN$ is at most $$
\frac{\left(\frac{4}{(d-2)!} +  \frac{2d}{(d-1)!}\right) n^{d-1} + \O\left( n^{d-2}\right)}{\frac{2}{(d-1)!} n^{d-1} + \O\left( n^{d-2}\right)} = 3d-2+\O(\frac{1}{n}).
$$
In particular, there exists some constant  $C_d$ determined by $d$, such that when $n>C_d$, the above bound becomes $3d-1$.
\end{proof}

\textbf{Remark 1. Interpretation of these bounds.}
Considering that $3d-1$ is a rather small bound, it can justify why simple polytopes dominate. If most polytopes are complex, the average face number should surpass $3d-1$ a lot. If simple polytopes only take up a small portion, the average face number will be larger than $3d-1$, too. In addition, unlike many other theories \cite{zhang2022neural,jacot2018neural,mei2018mean}, we do not assume that the network is infinitely wide in deriving the bound. 

Theorem \ref{mNN_zero_biases} and Theorems \ref{thm:sec6_1}, \ref{thm:sec6_2} are built for cases of zero biases and non-zero biases, respectively. It is a general practice to initialize biases with 0 before training a network, \textit{e.g.}, biases are often set to 0 in Xavier initialization \cite{glorot2010understanding}. Therefore, Theorem \ref{mNN_zero_biases} aligns with reality well. In addition, a ReLU network with zero biases becomes homogeneous, \textit{i.e.}, $\mN(\alpha\boldsymbol{\theta};\cdot)=\alpha^L\mN(\boldsymbol{\theta};\cdot)$, which is a widely-used setting when investigating implicit bias \cite{lyugradient,vardigradient}. Non-zero biases are so complicated to give a general and complete theorem for arbitrary cases. We only make success for one-hidden-layer networks with an arbitrary dimension and multi-layer networks with $d=2$. 

Yet looking straightforwardly, rigorously proving Theorems \ref{thm:sec6_1} and \ref{thm:sec6_2_bias=0} is intricate. The basic idea is twofold: Firstly, we derive the upper bound of simplices depending on the observation that for each $(d-1)$-dim face of a $d$-dim polytope, it can only be a face for one unique simplex in a triangulation of this polytope, thus the total number of $d$-simplices in triangulations of polytopes must be smaller than or equal to the total number of $(d-1)$-dim faces in all polytopes. Therefore, we just need to derive the upper bound for the total number of $(d-1)$-dim faces in all polytopes generated by a neural network $\mN$, which can be done by induction on the number of layers of $\mN$. Secondly, we derive the number of polytopes by the techniques and results from the classic hyperplane arrangement theories (see \cite{stanley2004introduction}). Finally, the quotient between the upper bound of simplices and the number of polytopes gives the upper bound for the average number of faces in linear regions of $\mN$.


\vspace{-0.3cm}
\subsection{After Training: Low-Rank}

\textit{Can we theoretically derive that polytopes remain simple after training?} It was shown that gradient descent-based optimization learns weight matrices of low rank \cite{galanti2023sgd, huh2021low, jigradient}. Therefore, under the low-rank setting, We also investigate if the polytopes are simple after the training. 
We derive Theorems \ref{mNN_zero_biases_low_rank} and \ref{mNN_zero_biases_one_hidden_layer_low_rank} to substantiate that after training, polytopes not just remain simple but turn simpler.

\begin{thm}[Multi-Layer NNs with Zero Biases and Low-rank Weight Matrices]\label{thm:sec6_2_bias=0'}
Let $\mN$ be an $L$-layer fully-connected ReLU NN with $d$ inputs and $n_i=n$ hidden neurons in the $i$-th hidden layer where $d$ and $n$ are two fixed positive integers. Assume that the weight matrix $W\in \mathbb{R}^{d\times n}$ in the first hidden layer has rank $d_0\leq d$. Suppose that all the biases of $\mN$ are equal to zero. Let $C(d,B):= [-B,B]^d$ be the input space of $\mN$.  Furthermore, assume that the number of hidden neurons and $B$ are large enough, then the average number of faces in linear regions of $\mN$ is at most $2d_0+d-2+\O(\frac{1}{n})$. In particular, there exists some constant  $C_d$ determined by $d$, such that when $n>C_d$, the above bound becomes $2d_0+d-1$.
\label{mNN_zero_biases_low_rank}
\end{thm}

\begin{proof}
The total number of faces of linear regions formed by $\mA$ is at most
$$
\left(\frac{4}{(d_0-2)!} +  \frac{2d}{(d_0-1)!}\right) n^{d_0-1} + \O\left( n^{d_0-2}\right).
$$
Finally, the average number of faces in linear regions of $\mN$ is at most $$
\frac{\left(\frac{4}{(d_0-2)!} +  \frac{2d}{(d_0-1)!}\right) n^{d_0-1} + \O\left( n^{d_0-2}\right)}{\frac{2}{(d_0-1)!} n^{d_0-1} + \O\left( n^{d_0-2}\right)}
= 2d_0+d-2+\O(\frac{1}{n}).
$$
In particular, there exists some constant  $C_d$ determined by $d$, such that when $n>C_d$, the above bound becomes $2d_0+d-1$.
\end{proof}

\begin{thm}[One-hidden-layer NNs, Low-rank Weight Matrices]\label{thm:sec6_1_training}
Let $\mN$ be a one-hidden-layer fully-connected ReLU NN with $d$ inputs and $n$ hidden neurons, where $d$ is a fixed positive integer. Assume that the weight matrix $W\in \mathbb{R}^{d\times n}$ has rank $d_0\leq d$, and any $d_0$ hyperplanes generated by any $d_0$ hidden neurons are in general position. Let $C(d,B):= [-B,B]^d$ be the input space of $\mN$. Furthermore, assume that $n$ and $B$ are large enough, then the average number of faces in linear regions of $\mN$ is at most $2d_0+\O(\frac{1}{n})$. In particular, when $n>2dd_0+d_0$, the above bound becomes $2d_0+1$.
\label{mNN_zero_biases_one_hidden_layer_low_rank}
\end{thm}

\begin{proof} 
We obtain that the number of $d$-simplices in triangulations of polytopes generated by $\mN$ is at most
$
\#\text{simplices} \leq 
2n\sum_{i=0}^{d_0-1} \binom{n-1}{i}+2d \sum_{i=0}^{d_0-1} \binom{n}{i}.
$
Also, the total number of polytopes produced by $\mN$ is 
$\sum_{i=0}^{d_0} \binom{n}{i}$ since any $d_0$ hyperplanes generated by any $d_0$ hidden neurons are in general position.
Therefore,  the average number of faces in linear regions of $\mN$ is at most
\begin{equation}
\begin{aligned}
  &   \frac{
2n\sum_{i=0}^{d_0-1} \binom{n-1}{i}+2d \sum_{i=0}^{d_0-1} \binom{n}{i}}{\sum_{i=0}^{d_0} \binom{n}{i}} \\
\leq &
\frac{
2n\sum_{i=0}^{d_0-1} \binom{n-1}{i}+2d \sum_{i=0}^{d_0-1} \binom{n}{i}}{\sum_{i=0}^{d_0-1} \binom{n}{i+1}}.
\end{aligned}
\label{Bound:1d}
\end{equation}
For each $0\leq i \leq d_0-1$, we have
\begin{equation}
    \begin{aligned}
& \frac{2n\cdot\binom{n-1}{i}+2d\binom{n}{i}}{\binom{n}{i+1}} \\
\leq & 2(i+1) + \frac{2d(i+1)}{n-i} 
\leq  2d_0   + \frac{2dd_0}{n-d_0+1} \\
=&2d_0+\O(\frac{1}{n}).
\end{aligned}
\end{equation}

Therefore, the average number of faces in linear regions of $\mN$ is at most $2d_0+\O(\frac{1}{n})$. Furthermore, when $n>2dd_0+d_0$, the above bound becomes $2d_0+1$.
\end{proof}

According to Theorems \ref{thm:sec6_2_bias=0'} and \ref{mNN_zero_biases_one_hidden_layer_low_rank}, we can see that when the weight matrix in the first hidden layer has a lower rank $d_0$, which is smaller than the input dimension $d$, then the average number of faces in linear regions of $\mN$ is mainly determined by $d_0$. 
This means that, after the training of a ReLU neural network, if the weight matrices become low-rank matrices (which is suggested by \cite{galanti2023sgd, huh2021low}), then the average number of faces in linear regions of $\mN$ would be much smaller, which means that the linear regions tend to be much simpler after training. 

\textbf{Remark 2. Explaining what happens when a network goes deep}. Modern deep learning theories, such as neural network Gaussian process \cite{zhang2022neural}, neural tangent kernel \cite{jacot2018neural}, and mean field \cite{mei2018mean}, need to assume infinite width, which essentially explain the behavior of a network when it goes wide. But depth is of most interest in the era of deep learning. Understanding of depth-induced network behaviors is essential in deciphering the mechanism of deep learning. However, currently, depth-oriented theories are few. Therefore, depth-oriented theories are a highly worthwhile research direction. Our theory from combinatorics suggests that increasing depth will not make the formed polytopes in ReLU networks more complex, which should be a valuable addition to the depth-oriented theory.

\textbf{Remark 3. Generalizing Implicit Bias}. It can be seen that the phenomenon that deep ReLU networks have simple polytopes is independent of the numerical method chosen for training, \textit{i.e.}, gradient descent. The result tends to be a constant only linearly dependent on the dimension. We still refer to this phenomenon as implicit bias by generalizing this concept. Our thinking is that the implicit bias can come from gradient descent and so on, and it can also be subjected to the geometric constraint by the network itself.

\vspace{-0.3cm}
\section{Conclusion}

In this manuscript, we have advocated studying the properties of polytopes instead of just counting them, towards revealing other valuable properties of a neural network. Then, we observed that deep ReLU networks have simple linear regions, which is not only a fundamental characterization but also explains what will happen when a ReLU network goes deep. Lastly, we have mathematically established a small bound for the average number of faces in polytopes, therefore supplying an explanation for the simple polytope phenomenon. An important future direction will be building the relationship between different forms of implicit biases \cite{galanti2023sgd}. If so, the understanding of implicit biases can be further deepened.


\bibliographystyle{ieeetr}
\bibliography{reference.bib}

\end{document}